\newmdenv[
  topline=false,
  bottomline=false,
  rightline=false,
  linewidth=1pt,
  linecolor=black!20,
  skipabove=\topsep,
  skipbelow=\topsep,
  innerleftmargin=1em,
  innerrightmargin=0pt,
  innertopmargin=0pt,
  innerbottommargin=0pt
]{lemmapush}
\newcommand{\squaredtimes}{\mathbin{\scalebox{1}{\setlength{\fboxsep}{0.5pt}\setlength{\fboxrule}{0.5pt}\boxed{\circ}}}}
\newtheorem{theorem}{Theorem}[section]
\newtheorem{proposition}{Proposition}[section]
\newtheorem{corollary}{Corollary}[theorem]
\newtheorem{lemma}[theorem]{Lemma}
\newtheorem{observation}{Observation}[section]
\newtheorem{remark}{Remark}
\DeclareMathOperator{\ReLU}{ReLU}
\DeclareMathOperator{\rank}{rank}
\DeclareMathOperator{\diag}{diag}
\newcommand\blfootnote[1]{%
  \begingroup
  \renewcommand\thefootnote{}\footnote{#1}%
  \addtocounter{footnote}{-1}%
  \endgroup
}
\title{$W_K, W_V$ is Probably All You Need: On the Necessity of the Query, Key, and Value Weight Triplet in Self-Attention Transformers}
\author[1]{Marko Karbevski\thanks{\texttt{marko.karbevski@gmail.com}, \texttt{marko.karbevski@insimplicity.tech}}}
\author[2]{Antonij Mijoski\thanks{\texttt{antonij.mijoski@unistra.fr}, 
\texttt{antonijmijo@gmail.com}}}
\affil[1]{In Simplicity Technologies}
\affil[2]{Institut de Recherche Mathématique Avancée (IRMA), 
Université de Strasbourg}
\begin{document}
\maketitle
\blfootnote{Code \& Checkpoints: \url{https://github.com/MarkoKarbevski/Wqkv_necessity}}
\blfootnote{To appear at the \href{https://delta-workshop.github.io/DeLTa2026/}{ICLR 2026 Workshop on Deep Generative Models (DeLTa)}.}
\begin{abstract}
We theoretically investigate whether the Query, Key, Value weight triplet can be reduced in encoder-only and decoder-only transformers. Under mild assumptions, we prove that one of the Query, Key or Value weights are redundant and can be replaced with the identity matrix, reducing attention parameters by 25\%. If applied to the Query or Key weights, this also simplifies optimization: attention logits depend on a single learned weight matrix rather than on a product of two. Validating the Query weight removal on decoder-only GPT-style small models trained from scratch, we find that reduced models match baseline performance despite fewer parameters, and outperform baselines when saved parameters are reallocated. Our analysis has also led us to a structural expressivity boundary: in the mathematically tractable ReLU setting, skip connections push MLPs into a generically disjoint function class at fixed width. These findings motivate investigation across modalities and at scale, where the observed stability and efficiency gains may prove most consequential.
\end{abstract}
\newpage 

\vspace*{\fill}
 \vspace*{\fill}
 \begin{center}
 \tableofcontents    
 \end{center}
 \vspace*{\fill}
\section{Introduction}

Training and deploying transformer-based language models~\cite{Attnisallyouneed} remains computationally expensive~\cite{samsi2023words}, motivating architectural optimizations~\cite{tay2022efficient} including quantization~\cite{xu2023parameter,MWM+24}, efficient attention~\cite{choromanski2020rethinking,wang2020linformer}, weight sharing~\cite{Shazeer2019FastTD,ALTdJ+23,lan2020albert}, and normalization streamlining and block restructuring~\cite{ICLR2024_simplyfing_transformer,heimersheim2024remove,baroni2025transformers,zhu2025transformersDyT}. Recent work has shown that normalization layers~\cite{heimersheim2024remove,baroni2025transformers} and attention parameters~\cite{ICLR2024_simplyfing_transformer} can be rearranged and simplified with minimal performance impact, suggesting current architectures may be overparameterized.

We investigate redundancy within the attention mechanism itself: \emph{is the entirety of the Query-Key-Value weight triplet necessary?} The key insight that motivated our research is the following: attention depends on the input $X$ only through the products $XW_Q$, $XW_K$, $XW_V$. This enables a telescoping construction where each layer's basis transformation prepares the input for the next, ultimately replacing $W_Q$ with the identity matrix throughout the network. Empirically, we validate that GPT-style models~\cite{Radford2019LanguageMA, brown2020language, Karpathy} trained from scratch with $W_Q = I_d$ achieve comparable validation loss to standard baselines, removing 25\% of attention parameters per layer (8\% of transformer block parameters). The deliberate choice of simplifying $W_Q=Id$ is made for consistency with the KV cache \& GQA optimization of LLMs, and empirically testing the simplification of $W_V$ or $W_K$ is left for future work, whereas the theoretical approach remains identical.

\subsection{Contributions}

We take a theory-first approach: we establish Query weight elimination under simplifying assumptions on normalization and skip connections, investigate the structural obstructions each introduces, and validate empirically in full architectural complexity.

\paragraph{Theoretical results.} We first observe that multi-head attention has intrinsic redundancy: any block-diagonal transformation can be absorbed into the Query-Key pair without changing the output (Proposition~\ref{prop:block-diag}). This redundancy is parametrized by a $(h \cdot d_k^2)$-dimensional manifold (one invertible $d_k \times d_k$ matrix per head), present in the full model with no simplifying assumptions. Under the additional assumption of no normalization layers, we prove Query weight redundancy in settings complementary to recent theoretical work by Graef~\cite{graef2024transformer}:

\begin{itemize}
\item \textbf{Single-layer ``Free Lunch'' (Theorem~\ref{thm:free-lunch}):} In any transformer without normalization, a single layer's Query weight can always be eliminated without architectural modifications, requiring only weight reparametrization. This applies to existing pretrained models once LayerNorm is removed via the techniques of Heimersheim~\cite{heimersheim2024remove} and Baroni et al.~\cite{baroni2025transformers}, providing a practical path to post-training Query weight elimination.

\item \textbf{Multi-layer Query weight elimination (Theorems~\ref{thm:AttentionSkipOnlyQWElim}, \ref{cor:weight-sharing}):} For transformers without layer normalization, we prove that $W_Q$ can be set to the identity in \emph{all} layers under two complementary conditions: (i) skip connections are placed exclusively around the attention sublayer, or (ii) weight-sharing is enforced across layers. The proofs rely on a characterization of how blocks transform between different bases.

\item \textbf{Semi-conjugacy of linear maps through layer normalization (Section~\ref{subsection:Layernorm_n_basischange}):} We derive sufficient conditions for basis transformations to commute with layer normalization (Lemma~\ref{lem:L_epsilon inversion lemma}, Theorem~\ref{cor:layernorm-basis}). We note that the resulting obstruction is strictly milder than the per-head modulation introduced by QK-normalization~\cite{henry2020querykey}, which poses no empirical limitation, as demonstrated by numerous SOTA models \cite{qwen2025, gemmateam2025gemma3technicalreport, olmo2025olmo3}.
\end{itemize}

To clarify our main observation regarding Multi-Head Attention, we introduce:
\begin{itemize}
    \item \textbf{Index-free notation for MHA (Section~\ref{subsection:block-notation}):} Building on variants of the Block Hadamard Products defined in \cite{Horn1991}, we provide a compact notation that more closely reflects standard implementations. Importantly, the motivating observation of our paper becomes mathematically trivial under this formulation.
\end{itemize}

Finally, the analysis of the above phenomena yields a fundamental result regarding the geometry of skip connections:
\begin{itemize}
\item \textbf{Solving the equation $\text{MLP} = \text{Id} + \text{MLP}$ (Theorem~\ref{prop:projector-characterization}):} We characterize when residual connections can be absorbed into ReLU MLPs used in transformers. We explicitly solve the functional equation $W_2\,\text{ReLU}(W_1 x) + x = V_2\,\text{ReLU}(V_1 x)$ for the unknown matrices $V_1, V_2$ given fixed $W_1, W_2$, where $V_i$ has the same dimensions as $W_i$ for $i \in\{1,2\} $.
\end{itemize}

\paragraph{Empirical validation.} We train GPT-style models (117M to 124M parameters) from scratch on OpenWebText, comparing against parameter-matched baselines on fully converged training runs ($\sim$29.5B tokens, $\sim$$12\times$ Chinchilla-optimal). With an adjusted attention scaling factor, the reduced 117M model matches the full 124M baseline despite 8\% fewer non-embedding parameters, while parameter-matched standard baselines at the same size perform measurably worse. Reallocating saved parameters to the MLP yields the best validation loss across all configurations. The reduced architecture benefits from higher learning rates, as attention logits depend on a single learned weight matrix rather than on a product of two (Section~\ref{sec:discussion}).

\paragraph{Scope and limitations.} This work establishes sufficient conditions for the elimination of Query weight matrices ($W_Q$). While the theoretical derivation utilizes simplified architectures, empirical validation on GPT-style models confirms that $W_Q$ redundancy persists in practical models. We prioritize evaluating learning capacity through log-loss performance. These experiments at the 117M-125M parameter scale demonstrate the technical feasibility of this simplification and establish the empirical baseline for systematic scaling, multi-seed validation, downstream benchmarking, and validation across modalities and architectures.

\paragraph{Organization.} We begin by positioning our work relative to recent simplification efforts in Section~\ref{sec:related}, then establish the essential notation and results that motivate our further analysis in Section~\ref{sec:prelim}. Section~\ref{sec:theory} forms the core of the paper: we present progressively stronger elimination results, from the single-head case through single-layer, attention-skip-only, and weight-shared multi-layer settings; Table~\ref{tab:coverage} summarizes which architectural features each result handles. We validate on GPT-style models trained from scratch with $W_Q = I_d$ in Section~\ref{sec:experiments}, and conclude with limitations and future directions in Section~\ref{sec:discussion}. The appendix contains further theoretical contributions: we characterize when basis transformations commute with LayerNorm (Appendix~\ref{subsection:Layernorm_n_basischange}), prove an exact condition for when residual connections can be absorbed into ReLU MLPs (Appendix~\ref{subsec:relu-skip}), and provide supporting experiments on MLP basis transfer approximation (Appendix~\ref{app:mlp}).

\begin{table}[!htb]
\centering
\label{tab:coverage}
\adjustbox{max width=\textwidth,center}{%
\begin{tabular}{|c|c|c|c|c|c|c|}
\hline
\textbf{Result Type} & \textbf{Section} & \textbf{Normalization} & \textbf{Skip Connections} & \textbf{Multihead} & \textbf{Multi-layer} & \textbf{Weight Decay} \\ 
\hline
\multirow{5}{*}{Theoretical} 
& \ref{subs:sha} & $\checkmark$ & $\checkmark$ & $\times$ & $\checkmark$ & \makecell{$\checkmark$ post-training\\ $\times$ during training} \\ 
\cline{2-7}
& \ref{subs:free-lunch} & $\times$ & $\checkmark$ & $\checkmark$ & \makecell{Single layer \\ only} & $\times$ \\ 
\cline{2-7}
& \ref{subs:mha-MHA} & $\times$ & \makecell{Around attention \\ only} & $\checkmark$ & $\checkmark$ & $\times$ \\ 
\cline{2-7}
& \ref{subs:weightshared} & $\times$ & $\checkmark$ & $\checkmark$ & \makecell{Shared weights \\ only} & $\times$ \\ 
\cline{2-7}
& \ref{subsection:Layernorm_n_basischange} & $\checkmark$ & Indep. & Indep. & Indep. & $\times$ \\ 
\hline
\multirow{2}{*}{Experimental} & \ref{app:mlp} & Indep. & $\checkmark$ & Indep. & Indep. & $\times$ \\ 
\cline{2-7}
& \ref{subs:full_gpt_train} & $\checkmark$ & $\checkmark$ & $\checkmark$ & $\checkmark$ & $\checkmark$ \\ 
\hline
\end{tabular}
}
\caption{Coverage of transformer components in theoretical analysis versus experimental validation. $\checkmark$ = feature covered, $\times$ = feature simplified/ignored, Indep.\ = result independent of feature.}

\end{table}

\section{Related Work}
\label{sec:related}

\paragraph{Architectural simplification.} Graef~\cite{graef2024transformer} formally proves that in skipless transformers without normalization, both $W_Q$ and $W_O$ can be eliminated simultaneously and leaves the question open whether elimination extends to practical architectures. We go further in several directions: (i) we analyze transformers \emph{with} skip connections by retaining $W_O$ to absorb basis changes, (ii) we characterize exactly when ReLU MLPs can absorb skip connections (Section~\ref{subsec:relu-skip}), (iii) we derive the functional form required for basis transformations through LayerNorm (Section~\ref{subsection:Layernorm_n_basischange}), and (iv) we validate through GPT-style pretraining. He \& Hofmann~\cite{ICLR2024_simplyfing_transformer} empirically study simplified parallel attention-MLP blocks; we focus on weight elimination within the original architecturGrouped-Query Attentione, rather than through block restructuring. Recent work showing that LayerNorm can be removed from pretrained models~\cite{heimersheim2024remove,baroni2025transformers} supports our no-normalization theoretical setting. Notably, Ji et al.~\cite{ji2025always} show that self-attention catastrophically fails to train without skip connections around attention, validating that our Theorem~\ref{thm:AttentionSkipOnlyQWElim} addresses the practically essential case.

\paragraph{Efficient attention.} Grouped-Query Attention~\cite{ALTdJ+23} and Multi-Query Attention~\cite{Shazeer2019FastTD} reduce parameters by sharing Key-Value projections. Linear attention methods~\cite{choromanski2020rethinking,wang2020linformer} reduce complexity. FlashAttention~\cite{dao2022flashattention} optimizes memory access. Our approach is orthogonal to these optimizations: Query weight elimination applies to standard, GQA, and MQA architectures alike, enabling multiplicative savings.

\paragraph{Weight sharing, tying, and recursive architectures.} In order to save memory Press and Wolf~\cite{press-wolf-2017-using} tie embedding and Language Modeling (LM) head weights with minimal performance degradation. This technique is further adopted in GPT-2~\cite{Radford2019LanguageMA} and many subsequent decoder-only models. Our theoretical work addresses both the tied and untied regimes.

Bai et al.~\cite{bai2019deep} study weight-shared transformers as implicit fixed-point solvers. ALBERT~\cite{lan2020albert} shares all parameters across layers, achieving 18$\times$ memory reduction at only modest performance cost. In this case the conditions of our theory are further relaxed: we systematically treat the case where all the skip connections are present, and only rely on the approximation of the change of basis through the LayerNorm. This implies that our theoretical investigation is highly relevant for this family of models. Finally, compared to weight sharing, our methods not only reduce memory, but also the computational footprint.

  Recursive models such as Tiny Recursion Models (TRM)~\cite{jolicoeurmartineau2025trm}, which apply a 2-layer block repeatedly, are natural candidates for Query weight elimination under our weight-sharing theorem. TRM uses RMSNorm, which our analysis covers (see Section~\ref{subsection:Layernorm_n_basischange}). Interestingly, the TRM authors found that embedding-head tying degraded performance, so TRM uses untied weights, satisfying our untying requirement and making Query weight elimination directly applicable.

\paragraph{Theoretical investigations.} Theoretical analysis of transformers typically trades architectural fidelity for mathematical tractability. We do the same to obtain tractable proofs, then verify our results empirically in full architectural complexity. For example, Yun et al.~\cite{yun2019transformersuniversal} establish universal approximation; Ildiz et al.~\cite{IHL+24} connect self-attention to Markov dynamics; Geshkovski et al.~\cite{GeshkEmergence,geshkovski2024mathematical} provide mean-field analyses of attention. Furthermore, our results bring the transformer variants analyzed in Theorem 3.1 and Appendix C.5 of~\cite{GeshkEmergence} closer to the ones used in practice.

\paragraph{Parameter-efficient fine-tuning and compression.} LoRA~\cite{lora} and related methods~\cite{Houlsby2019ParameterEfficientTL,qlora} achieve efficiency through low-rank adaptation during fine-tuning. Post-training compression via pruning~\cite{ma2023llm} and quantization~\cite{xu2023parameter,MWM+24} are also orthogonal to our approach. Query weight elimination applies at the architectural level, benefiting both pre-training and inference, and can be combined with these techniques for compound efficiency gains.

\section{Preliminaries}
\label{sec:prelim}

We now establish the notation used in the paper, as well as provide an observation on the self-attention that motivates the upcoming results.
\subsection{Notation}

We write $\text{Mat}(p,q)$ for the set of real matrices with p rows and q columns. We also denote by $GL(d)$ the set of invertible $d \times d$ matrices. Our results rely on $W_Q \in GL(d)$. This holds for almost every square matrix (in the Lebesgue sense)~\cite{Vershynin_2018} making it a reasonable assumption. Furthermore, near-singular $W_Q$ implies that certain directions contribute minimally to queries; replacing $W_Q$ with identity restores these directions.

Let $n$ be the sequence length, $d = d_\text{model}$ the embedding dimension, $h$ the number of heads, which brings us to the head dimension given by the integer $d_k = d/h$.

For each head $i \in \{1, \dots, h\}$, we define the head-specific weight matrices $W_Q^i, W_K^i, W_V^i \in  \text{Mat}(d,d_k)$.
The full layer parameters $W_Q, W_K, W_V \in  \text{Mat}(d,d)$ are constructed by concatenating these head blocks:
\begin{align*}
W_Q &= (W_Q^1 \mid \dots \mid W_Q^h) = \text{Concat}(W_Q^1, \dots, W_Q^h), \\
W_K &= (W_K^1 \mid \dots \mid W_K^h) = \text{Concat}(W_K^1, \dots, W_K^h), \\
W_V &= (W_V^1 \mid \dots \mid W_V^h) = \text{Concat}(W_V^1, \dots, W_V^h).
\end{align*}
Additionally, we denote by $W_O \in  \text{Mat}(d,d)$ the output projection matrix.

We let $M \in \{0, -\infty\}^{n \times n}$ denote the mask matrix defining the graph of allowed attention connections. The entry $M_{ij}$ determines whether position $i$ can attend to position $j$:
$$ M_{ij} = \begin{cases} 0 & \text{if token } i \text{ can attend to token } j \\ -\infty & \text{otherwise} \end{cases} $$
In the Bidirectional (Encoder-only) variant we have: $M = \mathbf{0}_{n \times n}$ and in the
Causal (Decoder-only) one we have: $M$ is strictly lower-triangular where $M_{ij} = -\infty$ if $j > i$ and $0$ otherwise.

The output of a single attention head $i$ is:
$$ \text{head}_i(X, W_Q, W_K, W_V) = \text{softmax}\left( \frac{(X W_Q^i)(X W_K^i)^t}{\sqrt{d_k}} + M \right) (X W_V^i). $$ where the softmax is being applied row-wise.

As an intermediary step, we define the \textit{Self-attention product} $\mathcal{S}_c$ as the concatenation of the heads:
$$ \mathcal{S}_c(X, W_Q, W_K, W_V) = \text{Concat}\bigl(\text{head}_1, \dots, \text{head}_h\bigr) \in  \text{Mat}(n,d). $$

The final \textit{Multi-Head Self-Attention} is the projection by $W_O$:
$$ \text{MHA}(X, W_Q, W_K, W_V, W_O) = \mathcal{S}_c(X, W_Q, W_K, W_V) \cdot W_O. $$ We consider $W_Q, W_K, W_V, W_O \in  \text{Mat}(d,d)$ as the attention parameters, and reducing one of them to $Id$ removes the claimed $25\%$ of the attention parameters. 

We use the definition of the Layer Normalization (LN) operator for an input vector $x \in \mathbb{R}^d$ as:
\begin{equation}
L_{\varepsilon, \gamma}(x) = \text{Ln}(x) = \gamma\odot \frac{x - \mu(x)}{\sqrt{\sigma^2(x) + \epsilon}},
\end{equation}
where $\mu(x) = \frac{1}{d}\sum_{i=1}^d x_i$ and $\sigma^2(x) = \frac{1}{d}\sum_{i=1}^d (x_i - \mu(x))^2$ denote the mean and variance of the components of $x$, respectively. Here, $\gamma \in \mathbb{R}^d$ represents a learnable rescaling parameter, $\gamma \odot x = \operatorname{diag}(\gamma)\, x$ denotes the componentwise (Hadamard) product, and $\epsilon > 0$ is a small constant. Note that we do not utilize a learnable bias (shift) parameter in this formulation. We often employ the abbreviated notation $\text{Ln}(x)$, where the dependence on the rescaling parameter $\gamma$ and the constant $\varepsilon$ is kept implicit. For $X \in  \text{Mat}(n,d)$ we denote by $\text{Ln}(X) \in  \text{Mat}(n,d)$ the application of the Layer Normalization on each row independently with the same fixed $\varepsilon, \gamma$.


\subsection{The Reparametrization Lemma}\label{sec:reparam}
We will start with a rather simple result on which we will build our theoretical investigation. 
The attention mechanism $\mathcal{S}_c$ depends on the input matrix $X$ \emph{only through} the projections $XW_Q$, $XW_K$, and $XW_V$, or, more formally:
\begin{observation}\label{obs:factorization}
\label{obs:reparam_invariance}
There exists a function $g$ such that:
$$\mathcal{S}_c(X, W_Q, W_K, W_V) = g(XW_Q, XW_K, XW_V)$$
\end{observation}

The proof of the preceding observation, while elementary, is deferred to Appendix \ref{subsection:block-notation}. The block notation and operators introduced there allow for a much cleaner exposition, but may cause unnecessary confusion if left in the main text.

The observation allows for the absorption of Query weights via a change of basis, which we formalize below.

\begin{lemma}[Reparametrization Lemma]\label{lem:repearmetrization}
Let $n, d \in \mathbb{N}$ and let $\Omega$ be any set. Consider a function 
$$f: \text{Mat}(n, d) \times \text{Mat}(d, d)^3 \to \Omega$$
that satisfies $f(X, W_Q, W_K, W_V) = g(XW_Q, XW_K, XW_V)$ for some function $g: \text{Mat}(n, d)^3 \to \Omega$.

Then $f$ is invariant under $(X, W_Q, W_K, W_V) \mapsto (X\Theta, \Theta^{-1}W_Q, \Theta^{-1}W_K, \Theta^{-1}W_V)$ for any $\Theta \in GL(d)$. In particular, if $W_Q \in GL(d)$, then there exist $\widetilde{W}_K, \widetilde{W}_V \in \text{Mat}(d,d)$ such that
$$f(X, W_Q, W_K, W_V) = f(XW_Q, I_d, \widetilde{W}_K, \widetilde{W}_V) \quad \text{for all } X \in \text{Mat}(n,d).$$
Explicitly, $\widetilde{W}_K = W_Q^{-1}W_K$ and $\widetilde{W}_V = W_Q^{-1}W_V$.
\end{lemma}

\begin{proof}
For any invertible $\Theta$, we have:
\begin{align*}
f(X\Theta, \Theta^{-1}W_Q, \Theta^{-1}W_K, \Theta^{-1}W_V) 
&= g(X\Theta \cdot \Theta^{-1}W_Q, \, X\Theta \cdot \Theta^{-1}W_K, \, X\Theta \cdot \Theta^{-1}W_V) \\
&= g(XW_Q, \, XW_K, \, XW_V) \\
&= f(X, W_Q, W_K, W_V).
\end{align*}
Setting $\Theta = W_Q$ gives $\Theta^{-1}W_Q = I_d$, with $\widetilde{W}_K = W_Q^{-1}W_K$ and $\widetilde{W}_V = W_Q^{-1}W_V$.
\end{proof}
Note that the choice of $W_Q$ above is arbitrary, and the same result under the analogous assumptions can be stated for $\Theta= W_K$ or $\Theta = W_V$.

We are now ready to dive deeper into the parameter reduction, and how it propagates between layers under different conditions.
\section{Theoretical analysis of the parameter reduction}\label{sec:theory}

In the previous notation, we have that the block can be written as:
\begin{align}
B(X) &= X+\text{MHA}\circ \text{Ln}_1(X) + \text{MLP} \circ \text{Ln}_2(X+\text{MHA}\circ \text{Ln}_1(X)) \label{eq:transformer_block_expanded} \\
&= (\text{Id} + \text{MLP}\circ \text{Ln}_2) \circ (\text{Id}+\text{MHA}\circ \text{Ln}_1)(X) \label{eq:transformer_block_factored}
\end{align}
where $\text{Ln}_i$ are two Layer Normalization layers with independent learnable vectors $\gamma_i$.
Let us now provide the theoretical investigation of the proposed improvement under different assumptions. The results are ordered by the subjective difficulty of deriving them. We remind the reader that such simplifications of the model have proven fruitful in the theoretical analysis of the transformer architecture \cite{GeshkEmergence, IHL+24, yun2019transformersuniversal, lan2020albert}.

\subsection{On some degrees of freedom in the Single Head and Multi-Head Attention}\label{subs:sha}
\subsubsection{The Single Head case}
We begin with single-head attention, the conceptually simplest case. Remarkably, this setting admits a clean result that holds regardless of normalization or skip connections: attention scores depend on queries and keys only through their product $W_QW_K^t$, and the output depends on values and output projection only through $W_VW_O$. Unlike the multi-head case analyzed in subsequent sections, the single-head reparametrization is local to the attention mechanism and does not require propagating basis transformations through other architectural components.

In the single head case, the attention mechanism reads:
$$\text{MHA}(X, W_Q, W_K, W_V, W_O) = \text{softmax}\left(\frac{XW_QW_K^tX^t}{c} + M \right)XW_V W_O$$

where $X \in \text{Mat}(n, d)$ is the input sequence, and $W_Q, W_K, W_V, W_O \in \text{Mat}(d \times d)$.

The following proposition confirms this intuition and serves as a warm-up for the multi-head and multi-layer cases analyzed in subsequent sections.

\begin{proposition}[Single-head redundancy]\leavevmode\\
For single-head attention, the four weight matrices $(W_Q, W_K, W_V, W_O)$ can be reduced to two, with the reduced form:
\begin{align*}
\mathrm{MHA}(X, W_Q, W_K, W_V, W_O)
 = \mathrm{MHA}(X, I_d, \widetilde{W}_K, \widetilde{W}_V, I_d)
\end{align*}
where $\widetilde{W}_K = W_K W_Q^t$ and $\widetilde{W}_V = W_V W_O$.
\end{proposition} 

\begin{proof} The attention computation depends only on the product $W_QW_K^t$ (for attention scores) and $W_VW_O$ (for the output). Setting $W_{QK} = W_KW_Q^t$ and $W_{VO} = W_VW_O$:
\begin{align*}
\text{softmax}\left(\frac{XW_QW_K^tX^t}{c}\right)XW_V W_O = \text{softmax}\left(\frac{X \cdot I_d \cdot W_{QK}^t \cdot X^t}{c}\right)X \cdot W_{VO} \cdot I_d
\end{align*}
This preserves the exact same output for any input sequence $X$.
\end{proof}

Weight decay during training may prevent the optimizer from naturally discovering this reduced form, as it penalizes $\|W_Q\|^2 + \|W_K\|^2 + \|W_V\|^2 + \|W_O\|^2$ rather than $\|W_QW_K^t\|^2 + \|W_VW_O\|^2$. That said, an already-trained such model can have its weights merged post-training.

\subsubsection{Extension to the Multi Head case}\label{subsection:multihead-blockdiag-invariance}
The previous result extends naturally in the multihead setting but in a weaker variant:

\begin{proposition}[Block-diagonal invariance]\label{prop:block-diag}
    Let $D = \mathrm{diag}(D_1, \ldots, D_h) \in GL(d)$ be any block-diagonal invertible matrix with blocks $D_i \in GL(d_k)$. Then:
    \[
    \mathcal{S}_c(X, W_Q, W_K, W_V) = \mathcal{S}_c\left(X, W_QD, W_K\left(D^\top\right)^{-1}, W_V\right).
    \]
\end{proposition}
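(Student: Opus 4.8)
The plan is to unwind the definition of $\mathcal{S}_c$ and observe that $W_Q$ and $W_K$ enter it \emph{only} through the pre-softmax logits $\frac{1}{\sqrt{d_k}}(XW_Q)\squaredtimes(XW_K)^T$; since the reparametrization leaves $W_V$ untouched, it suffices to show that this block-structured logit matrix is invariant under $(W_Q, W_K)\mapsto(W_Q D,\, W_K(D^t)^{-1})$. Everything downstream — the causal block softmax $\text{CausalSoftmax}^{\squaredtimes}$ and the final contraction $\squaredtimes^t(XW_V)$ — then produces an identical output automatically. This is in the same spirit as the Reparametrization Lemma, but restricted to the query--key pair and carried out head-by-head, which is exactly why the block-diagonal hypothesis on $D$ is needed.

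Concretely, I would decompose $W_Q$ and $W_K$ into their $h$ column blocks $W_Q^{(i)}, W_K^{(i)}\in\mathbb{R}^{d\times d_k}$, so the $i$-th block of $XW_Q$ is $XW_Q^{(i)}$ and the $i$-th block of $(XW_K)^T$ is $(W_K^{(i)})^T X^T$; by the definition of $\squaredtimes$, the $i$-th block of the logit matrix is $XW_Q^{(i)}(W_K^{(i)})^T X^T$. The one elementary fact to record is that right-multiplication by a block-diagonal matrix acts block-wise on column blocks: $(W_Q D)^{(i)} = W_Q^{(i)} D_i$ and $(W_K(D^t)^{-1})^{(i)} = W_K^{(i)}(D_i^t)^{-1}$, using $(D^t)^{-1} = \diag\big((D_1^t)^{-1},\ldots,(D_h^t)^{-1}\big)$.

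Substituting, the $i$-th logit block for the reparametrized pair is $X\big(W_Q^{(i)}D_i\big)\big(W_K^{(i)}(D_i^t)^{-1}\big)^T X^T = X W_Q^{(i)} D_i\, D_i^{-1}\,(W_K^{(i)})^T X^T = X W_Q^{(i)}(W_K^{(i)})^T X^T$, where I use $\big((D_i^t)^{-1}\big)^T = \big((D_i^t)^T\big)^{-1} = D_i^{-1}$ followed by the cancellation $D_i D_i^{-1} = I_{d_k}$. Since this holds for every head $i$, the full logit matrix is unchanged, and feeding it through $\text{CausalSoftmax}^{\squaredtimes}$ and then $\squaredtimes^t(XW_V)$ yields $\mathcal{S}_c(X, W_Q, W_K, W_V) = \mathcal{S}_c\big(X, W_Q D, W_K(D^t)^{-1}, W_V\big)$. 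The only place demanding care — the ``main obstacle'', such as it is — is the block/transpose bookkeeping: verifying that right-multiplication by $D$ does not mix head blocks (so the per-head cancellation is legitimate) and getting the transpose of $(D_i^t)^{-1}$ correct; the rest is direct substitution.
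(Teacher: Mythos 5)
Your proof is correct. Note that the paper does not actually supply a proof for this proposition; it is stated and then immediately followed by discussion of its consequences, with the verification left implicit as a short direct computation. Your argument supplies exactly that computation and is sound: the softmax input depends on $(W_Q, W_K)$ only through the per-head products $X W_Q^{(i)} (W_K^{(i)})^T X^T$, the block-diagonality of $D$ guarantees $(W_Q D)^{(i)} = W_Q^{(i)} D_i$ and $(W_K (D^t)^{-1})^{(i)} = W_K^{(i)} (D_i^t)^{-1}$ without mixing heads, and the cancellation $D_i \bigl((D_i^t)^{-1}\bigr)^T = D_i D_i^{-1} = I_{d_k}$ closes the argument; the causal block softmax and the contraction with $XW_V$ then carry through unchanged. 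Your observation that the block-diagonal hypothesis is precisely what prevents head mixing is the right one to emphasize, and is the one substantive point the reader must check — a general $D \in GL(d_{\text{model}})$ would not preserve the per-head logit structure.
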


The result formally identifies under-utilized degrees of freedom in the $(W_Q, W_K)$ pair and there probably is a similar amount of degrees of freedom in the case of $W_V$ when paired with $W_O$, but we leave such explorations to future work. In practice, weight decay partly addresses these degrees of freedom by biasing the optimizer toward a particular representative within each equivalence class.

That said, the degrees of freedom exposed by the result above are far from sufficient to guarantee the simplification of $W_Q$ or $W_K$. 

The simplification itself requires a more careful analysis of how the basis propagation happens in order to apply the Reparametrization Lemma~\ref{lem:repearmetrization}. Under a set of assumptions, it is what we will tackle in the remainder of the theoretical analysis section below.

\subsection{Single-Layer Query Weight Elimination (Free Lunch)}\label{subs:free-lunch}

Before addressing multi-layer elimination, we establish a foundational result: in any transformer without normalization, a \emph{single} layer's Query weight can always be eliminated, regardless of skip connection configuration. This ``free lunch'' requires no architectural modifications, requiring only weight reparametrization.

\begin{theorem}[Single-Layer Query Weight Elimination]\label{thm:free-lunch}
Consider an $L$-layer encoder-only or decoder-only transformer without normalization, with standard skip connections around both attention and MLP in each block, and with untied weights. Let $E, E_P, W_{\text{head}}$ be embedding, positional embedding, and output head weights, and let 
$$
\left(W^i_{MLP_u}, W^i_{MLP_d}, W^i_Q, W^i_K, W^i_V, W^i_O\right)_{i=1}^L
$$ 
be the layer weights.

For any $j \in \{1, \ldots, L\}$ with $W_Q^j \in GL(d)$, there exist modified weights 
$$
\left(\widetilde{E}, \widetilde{E}_P, \widetilde{W}_{\text{head}}, \left(\widetilde{W}^i_{MLP_u}, \widetilde{W}^i_{MLP_d}, \widetilde{W}^i_Q, \widetilde{W}^i_K, \widetilde{W}^i_V, \widetilde{W}^i_O\right)_{i=1}^L\right)
$$
such that $\widetilde{W}_Q^j = I_d$ and the transformer with these modified weights produces identical outputs to the original.
\end{theorem}

The proof appears in Appendix~\ref{app:proofs}.

\begin{remark}[Why not all layers?]\label{rem:why-not-all}
To eliminate all Query weights simultaneously, we would need layer $i$ to use basis $\Theta_i = W_Q^i$. This requires blocks to satisfy:
\[
\widetilde{B}_i(X\Theta_i) = B_i(X)\Theta_{i+1}
\]
with potentially $\Theta_i \neq \Theta_{i+1}$. However, the MLP skip connection forces intertwining (same basis in and out):
\[
\widetilde{\text{MLP}}_i(Z\Theta) + Z\Theta = (\text{MLP}_i(Z) + Z)\Theta
\]
This constraint admits two solutions: (i) remove the MLP skip, allowing $W_{down}^i$ to transform between bases (Theorem~\ref{thm:AttentionSkipOnlyQWElim}), or (ii) share weights so $\Theta_i = \Theta$ for all $i$ (Theorem~\ref{cor:weight-sharing}).
\end{remark}

\begin{remark}[Post-training application]\label{rem:post-training}
Heimersheim~\cite{heimersheim2024remove} and Baroni et al.~\cite{baroni2025transformers} show that LayerNorm can be removed from pretrained transformers (up to GPT-2 XL) with minimal degradation. For such models, our theorem applies directly post-training: a single layer's $W_Q$ can be eliminated through weight reparametrization alone, saving 25\% of that layer's attention parameters.
\end{remark}

\subsection{Multihead Attention with Skip Connection around the Attention \& without Normalization}\label{subs:mha-MHA}

Having established that single-layer elimination is always possible, we now address the challenge of eliminating Query weights from \emph{all} layers simultaneously. As Remark~\ref{rem:why-not-all} explains, this requires either removing the MLP skip connection or sharing weights across layers.

We first consider the attention-skip-only architecture: skip connections around attention but not around MLPs. This setting introduces a fundamental mechanism: basis transformations can propagate through the network via the \emph{intertwining} relation, where each block transforms between different input and output bases.

This architectural choice is partially supported by complementary evidence across domains. In vision transformers, removing attention skip connections causes catastrophic training failure, while removing MLP skip connections yields only modest degradation \cite{li2025always}. For language models, \cite{ICLR2024_simplyfing_transformer} show that parallel attention-MLP architectures sharing a single skip connection match standard sequential designs, suggesting architectural flexibility though not establishing clear hierarchy of importance.

We temporarily set aside layer normalization. This simplification is supported by evidence that normalization can be removed at inference with minimal degradation \cite{baroni2025transformers,heimersheim2024remove}, and that deep transformers can be trained without normalization when using signal-propagation-preserving attention modifications \cite{he2023deep}. As shown in Section~\ref{subsection:Layernorm_n_basischange}, incorporating normalization introduces technical complications obscuring the core mechanism. We therefore establish our result in this clearer setting first, then investigate conditions for extension through normalization layers.

\paragraph{Optimization versus trainability.} Skip connections confer two distinct benefits that are often conflated. \emph{Optimization} refers to training efficiency: skip connections smooth the loss landscape~\cite{li2018visualizing}, eliminate singularities that slow convergence~\cite{orhan2017skip}, and enable gradient flow through very deep networks~\cite{he2016deep}. Networks \emph{can} train without skip connections, just more slowly and with worse final performance. He et al.~\cite{he2023deep} show that ``vanilla'' transformers (without skip connections or normalization) can be trained using special modifications to self-attention, but require approximately $5\times$ more iterations to reach equivalent performance.

\emph{Trainability}, by contrast, refers to whether training succeeds at all. For self-attention specifically, Ji et al.~\cite{ji2025always} show that training \emph{catastrophically fails} without skip connections around attention, while removing them in other components (MLPs, convolutions) is much less consequential. Our Theorem~\ref{thm:AttentionSkipOnlyQWElim} addresses this practically essential case: transformers with skip connections around attention.

With these simplifications, we state our main result:

\begin{theorem}[Attention-Skip-Only Query Weight Elimination]\label{thm:AttentionSkipOnlyQWElim}
Consider an $L$-layer encoder-only or decoder-only transformer without normalization and with skip connections only around attention. Let $E, E_P, W_{\text{head}}$ be embedding, positional embedding, and output head weights (LM head for decoder-only, task head for encoder-only), and let 
$$
\left(W^i_{MLP_u}, W^i_{MLP_d}, W^i_Q, W^i_K, W^i_V, W^i_O\right)_{i=1}^L
$$ 
be the layer weights.

Then there exist modified weights 
$$
\left(\widetilde{E}, \widetilde{E}_P, \widetilde{W}_{\text{head}}, \left(\widetilde{W}^i_{MLP_u}, \widetilde{W}^i_{MLP_d}, I_d, \widetilde{W}^i_K, \widetilde{W}^i_V, \widetilde{W}^i_O\right)_{i=1}^L\right)
$$
such that the transformer with these weights produces identical outputs to the original.
\end{theorem}

\begin{remark}
    We can extend the result to the model without normalization but \textbf{with all the skip connections} (around the Attention and around the MLP) but then we can only reliably fix a single layer's $W_Q$ matrix with $Id$ and train the rest of the Query weight matrices.
\end{remark}

\subsection{Weight-Shared Multi-Layer Transformers with skip connections and no normalization}\label{subs:weightshared}

Weight sharing across transformer layers appears in both practical and theoretical contexts. Practically, ALBERT \cite{lan2020albert} employs full parameter sharing to achieve up to 18$\times$ parameter reduction compared to BERT while maintaining competitive performance, demonstrating that layer-specific parameters may be inessential for many tasks. Theoretically, weight sharing enables multiple analytical frameworks: deep equilibrium models \cite{bai2019deep} study transformers as fixed-point solvers of a single repeated block, while mean-field approaches \cite{geshkovski2024mathematical} analyze the continuous-time limit as $L \to \infty$, revealing clustering phenomena and long-time behavior of token representations in such a case.

Theorem~\ref{cor:weight-sharing} provides structural evidence that, in weight-shared architectures, the query matrix can be eliminated without loss of generality. While established here in a simplified setting, this canonical form offers analytical indications for simplifying fixed-point and continuous-time limits analysis.

Having established query elimination for a single layer, we now consider the weight-shared regime where all layers use identical parameters.
\begin{theorem}[Weight-Shared Query Weight Elimination]
\label{cor:weight-sharing}
Consider an $L$-layer encoder-only or decoder-only transformer without normalization where all layers share the same block parameters. Let $E, E_P, W_{\text{head}}$ be embedding, positional embedding, and output head weights, and let 
\[
\left(W_{MLP_u}, W_{MLP_d}, W_Q, W_K, W_V, W_O\right)
\]
be the shared layer weights with $W_Q \in GL(d)$.

Then there exist modified weights 
\[
\left(\widetilde{E}, \widetilde{E}_P, \widetilde{W}_{\text{head}}, \widetilde{W}_{MLP_u}, \widetilde{W}_{MLP_d}, I_d, \widetilde{W}_K, \widetilde{W}_V, \widetilde{W}_O\right)
\]
such that the transformer with these weights produces identical outputs to the original.
\end{theorem}

\begin{remark}
We note that in the case of tied LM head and embedding weights ($W_{LM} = E^t$), the single-layer reduced model must untie these weights to maintain functional equivalence. Specifically, the reduced model requires $\widetilde{W}_{LM} = W_Q^{-1}E^t$ and $\widetilde{E} = EW_Q$, which satisfy $\widetilde{W}_{LM} \neq \widetilde{E}^t$ unless $W_Q$ is orthogonal.
\end{remark}

We validate our theoretical results by training decoder-only language models from scratch with $W_Q = I_d$ and comparing them to standard baselines. Our experiments confirm that Query weights are indeed redundant: models trained without them achieve comparable performance to standard models while using fewer parameters.
\section{Experiments}\label{sec:experiments}

We validate our theoretical results by training GPT-style models from scratch with $W_Q = I_d$. Preliminary experiments on MLP basis transfer approximation are in Appendix~\ref{app:mlp}.

\subsection{Pretraining of GPT-Style Models}\label{subs:full_gpt_train}

\textbf{Architecture.} We use Karpathy's NanoGPT~\cite{Karpathy} implementation of GPT-2/GPT-3-small~\cite{Radford2019LanguageMA,brown2020language}: 12 layers, 12 heads, $d = 768$, MLP hidden dimension $4d = 3072$, GELU activations~\cite{hendrycks2016gaussian}, LayerNorm~\cite{ba2016layer}, sequence (context) length 1024, GPT-2 BPE tokenizer. Compared to GPT-2 and 3, but consistent with modern frontier models and Karpathy's implementation, we omit the bias parameters.

\textbf{Training \& Evaluation.} All models are trained for 60k gradient steps on OpenWebText~\cite{Gokaslan2019OpenWeb} with mixed-precision training, using AdamW~\cite{loshchilov2018decoupled} with $\beta_1 = 0.9$, $\beta_2 = 0.95$, gradient clipping at 1.0, weight decay $0.1$, and $\sim$490k tokens per gradient step ($\sim$29.5B tokens total, approximately $12\times$ Chinchilla-optimal~\cite{hoffmann2022training} for models of this size). The cosine learning rate schedule completes fully at 60k steps with 2k steps of warmup, ensuring all models are evaluated at convergence rather than mid-schedule. Baseline models use the standard GPT-2/GPT-3 learning rate of $6 \times 10^{-4}$ decaying to $6 \times 10^{-5}$; reduced models use higher peak learning rates (see Table~\ref{tab:results} and Section~\ref{sec:discussion}). To ensure a fair comparison, we pre-generate all training and evaluation batch indices from a fixed random seed and reuse them across all model variants, so every configuration sees identical data in identical order. Validation loss is estimated every 1000 steps by averaging over 2400 sequences ($\approx$2.5M tokens) drawn independently from the validation split. Training uses a single NVIDIA RTX 5090 GPU with FlashAttention~\cite{dao2022flashattention}. All configurations use tied embedding/LM-head weights.

\textbf{Model variants.} To isolate the effect of Query weight elimination from parameter count differences, we compare five configurations (Table~\ref{tab:results}). Following scaling law conventions~\cite{hoffmann2022training}, Table~\ref{tab:results} and Figure~\ref{fig:training} report non-embedding parameters. Throughout, \emph{reduced} refers to architectures with $W_Q = I_d$:
\begin{itemize}
    \item \textbf{Baseline (124M):} Standard GPT-2 architecture with full $(W_Q, W_K, W_V, W_O)$ and MLP hidden dimension $4 \times d$ (84.95M non-embedding parameters).
    \item \textbf{Baseline (117M, Smaller MLP):} Standard architecture with MLP hidden dimension $3.5 \times d$ (77.88M non-embedding parameters).
    \item \textbf{Baseline (118M, Smaller $d$):} Standard architecture with $d = 744$ instead of 768 (79.73M non-embedding).
    \item \textbf{Reduced (117M):} Reduced architecture with $W_Q = I_d$ and the same $4 \times d$ MLP. Eliminating $W_Q$ reduces non-embedding parameters to 77.88M.
    \item \textbf{Reduced (124M, Larger MLP):} Reduced architecture with saved parameters reallocated to MLP ($4.5 \times d$ hidden dimension, 84.95M non-embedding).
\end{itemize}

\textbf{Practical adjustments.} Two modifications are necessary for the reduced architecture:
\begin{enumerate}
    \item \textbf{Attention Scaling Correction:} We adopt a scaling factor of $\frac{1}{2\sqrt{d_k}}$ instead of the standard $\frac{1}{\sqrt{d_k}}$. The intuition: with $W_Q = I_d$, queries are coordinate slices of the input rather than learned projections. At initialization, this yields attention scores with approximately $1.8\times$ larger standard deviation than the baseline (see Appendix~\ref{app:scaling} for the full derivation). The factor of $\frac{1}{2}$ compensates for this to prevent early softmax saturation.
    
\item \textbf{Learning Rate:} We find that the reduced architecture benefits from higher peak learning rates than the standard GPT-2/GPT-3 value of $6 \times 10^{-4}$: we use $1.6 \times 10^{-3}$ for the base reduced model and $2.2 \times 10^{-3}$ for the larger-MLP variant, both decaying to $2 \times 10^{-5}$. Standard weight decay ($0.1$) is used throughout. In the standard architecture, attention logits depend on the product $W_Q W_K^t$, quadratic in the learned weight matrices; in the reduced architecture, the logits are linear in $W_K$ alone, which admits higher learning rates. We tested weight decay values across $[0.025, 0.1]$: lower values tolerated only slightly smaller learning rates than the best configurations reported here, and higher weight decay within this range yielded slightly better final performance. We did not explore values above $0.1$, which may yield further improvements; this is left for future work. We note that in pre-LayerNorm transformers, weight decay is not applied to the learnable LayerNorm scaling $\gamma$, and therefore acts as an optimization stabilizer rather than a regularizer~\cite{NEURIPS_WDRole}.
    
\end{enumerate}

\begin{table}[t]
\centering
\caption{Model configurations and validation loss at 60k steps (completed cosine schedule, $\sim$29.5B tokens, $\sim$$12\times$ Chinchilla-optimal).}
\label{tab:results}
\small
\begin{tabular}{lccc|cc}
\toprule
 & Baseline & Baseline & $W_Q = I$ & Baseline & $W_Q = I$ \\
 & {\scriptsize(smaller MLP)} & {\scriptsize(smaller $d$)} & {\scriptsize(117M)} & {\scriptsize(124M)} & {\scriptsize(larger MLP)} \\
\midrule
Trainable $W_Q$ & $\checkmark$ & $\checkmark$ & $\times$ & $\checkmark$ & $\times$ \\
Total params & 117.30M & 117.92M & 117.30M & 124.37M & 124.37M \\
Non-emb params & 77.88M & 79.73M & 77.88M & 84.95M & 84.95M \\
MLP hid.\ mult. & $3.5\times$ & $4\times$ & $4\times$ & $4\times$ & $4.5\times$ \\
Attention scale & $1/\sqrt{64}$ & $1/\sqrt{62}$ & $1/(2\sqrt{64})$ & $1/\sqrt{64}$ & $1/(2\sqrt{64})$ \\
Weight decay & $0.1$ & $0.1$ & $0.1$ & $0.1$ & $0.1$ \\
Peak LR & $6\text{e-}4$ & $6\text{e-}4$ & $1.6\text{e-}3$ & $6\text{e-}4$ & $2.2\text{e-}3$ \\
Min LR & $6\text{e-}5$ & $6\text{e-}5$ & $2\text{e-}5$ & $6\text{e-}5$ & $2\text{e-}5$ \\
\midrule
Val loss (60k) & 2.955 & 2.956 & 2.944 & 2.944 & \textbf{2.929} \\
vs.\ Baseline & $+0.37\%$ & $+0.40\%$ & $+0.00\%$ & Ref. & $\mathbf{-0.52\%}$ \\
\bottomrule
\end{tabular}
\end{table}

\begin{figure}[t]
\centering
\includegraphics[width=0.7\textwidth]{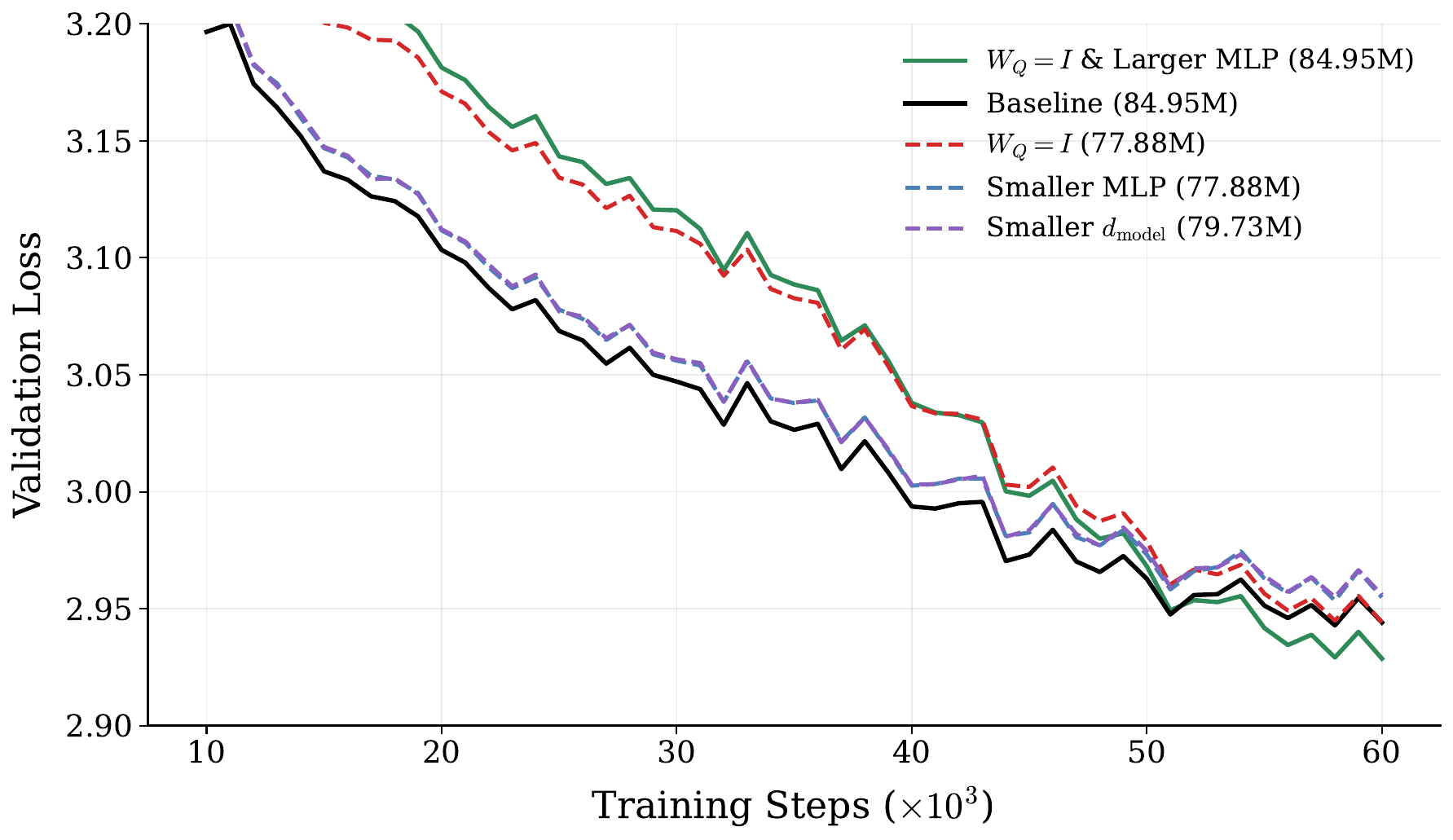}
\caption{Validation loss during training (non-embedding parameters in parentheses). All models complete their full cosine learning rate schedule at 60k steps.}
\label{fig:training}
\end{figure}

\textbf{Results.} Table~\ref{tab:results} and Figure~\ref{fig:training} summarize our findings:

\begin{enumerate}
    \item \textbf{Query weights are largely redundant.} The reduced model ($W_Q = I_d$, 77.88M non-embedding parameters) matches the full baseline (84.95M) in validation loss (2.944 vs.\ 2.944), despite having 8\% fewer non-embedding parameters. The 7M parameters in $W_Q$ contribute nothing to generalization.
    
    \item \textbf{Parameter reallocation improves performance.} When saved parameters are reallocated to the MLP ($4.5 \times d$ hidden dimension), the reduced architecture achieves the \emph{best} validation loss (2.929 vs.\ 2.944 for the baseline at equal parameter count), a clear improvement.
    
    \item \textbf{$W_Q$ parameters are specifically redundant.} At equal non-embedding parameter count ($\sim$78M), standard baselines with smaller MLP or smaller $d$ achieve 2.955--2.956, meaningfully worse than the full 84.95M baseline. Reducing parameters in the MLP or embedding dimension costs performance; removing $W_Q$ does not. This identifies the Query projection as the one location in the architecture where parameters are wasted.
    
    \item \textbf{Standard weight decay suffices.} All models use weight decay $0.1$, the standard value. No special regularization adjustment is needed for the reduced architecture.
\end{enumerate}

\section{Discussion}\label{sec:discussion}

\textbf{Learning rate schedule.} The reduced architecture benefits from peak learning rates $2.7$--$3.7\times$ higher than the standard GPT-2/GPT-3 value of $6 \times 10^{-4}$. This is consistent with the simplified attention landscape: with logits linear in $W_K$ rather than quadratic in $W_Q W_K^t$, the loss surface admits larger steps without destabilization. Interestingly, while the peak learning rate is much higher than the baseline, the minimum learning rate is substantially \emph{lower}: $2 \times 10^{-5}$ versus the baseline's $6 \times 10^{-5}$, giving a peak-to-minimum ratio of $80$--$110\times$ compared to the standard $10\times$. This deeper decay proved beneficial; using the conventional $\frac{1}{10}$ ratio (i.e., a higher minimum learning rate) yielded worse results. The combination of aggressive early learning with deep late-stage decay may reflect the reduced architecture's ability to make large initial updates without destabilization, followed by fine-grained convergence at very low learning rates.

\textbf{Limitations.} Our experiments validate the core result on fully converged training runs at the 117M-124M parameter scale, training approximately $12\times$ beyond Chinchilla-optimal token count. Multi-seed validation, larger scales, and downstream benchmarks represent natural next steps. The reduced architecture's optimal learning rate differs from the baseline, requiring a hyperparameter search over learning rate; however, weight decay and all other hyperparameters remain unchanged.

\textbf{Extensions and future directions.} The reduction mechanism relies only on a linear projection structure and extends to: Rotary Position Embeddings~\cite{su2021ROPEroformer} (fixed rotations satisfy the condition of the Reparametrization Lemma), Grouped-Query Attention~\cite{ALTdJ+23} (transformations act per query-head group), Mixture-of-Experts~\cite{jacobs1991adaptive,shazeer2017outrageously}, as well as QK-normalization~\cite{henry2020querykey}. The analysis extends without any modification to encoder-only architectures, with potentially larger relative savings. Technical directions include exploring untied embedding configurations.

While we have chosen the Query projection as the primary elimination target due to its seamless compatibility with the KV cache and GQA, insights from the parameter-efficient fine-tuning literature serve as a useful empirical guideline for weight importance. Specifically, studies often observe that the Key matrix has the smallest effect on performance, suggesting $W_K$ may theoretically be an even better candidate for structural simplification.

Furthermore, while our results demonstrate that \emph{linear} Query projections provide no benefit over identity, exploring \emph{nonlinear} variants presents a natural extension. Building on these same empirical guidelines, if Key modifications yield minimal effects, the Query conversely serves as a stronger driver of expressivity and a better target for nonlinear enhancement. A nonlinear Query transformation with a skip connection, i.e., $Q(X) = X + N(X; \theta)$ where $N$ is a nonlinearity with approximately the same number of training parameters as $W_Q$, may offer additional expressivity while preserving training stability, compute, and memory budgets. Enhancing the Query represents a practical sweet spot for autoregressive generation. Conversely, if KV caching or GQA are not required (as is the case in many encoder-only models), testing such nonlinearities on the Value projection ($W_V$) instead could also prove to be an excellent starting point. 

Additionally, recent work on Hyper-Connections~\cite{zhu2024hyperconnections} and Manifold Constrained Hyper-Connections~\cite{xie2025mhc} expands the residual stream width and diversifies connectivity patterns while maintaining stability through doubly stochastic constraints. Investigating Query, Key, or Value weight elimination in such architectures presents an interesting direction.

\section{Conclusion}\label{sec:conclusion}
We prove that the Query weight matrix $W_Q$ in multi-head attention can be eliminated through basis transformations under simplified architectural assumptions, reducing attention parameters by 25\% per layer. Empirical validation confirms that models with $W_Q = \text{Id}$ match baselines when appropriately tuned, and \emph{exceed} baselines when saved parameters are reallocated to the MLP, consistent with our theoretical analysis identifying MLP expressivity as a potential limiting factor (Sections~\ref{subsection:Layernorm_n_basischange} and \ref{subsec:relu-skip}). The results suggest architectural redundancy in the Query-Key-Value triplet and raise questions about which transformer components are necessary for expressivity versus artifacts of design history.
\section{Appendix}

\subsection{Block-notation for the multi-head
attention}\label{subsection:block-notation}

We proceed as follows:

First, we prove Observation~\ref{obs:factorization} using the standard 
notation for the convenience of the reader. 

Second, we introduce the Block Hadamard Products (as variants of those 
introduced in~\cite{Horn1991}) to model the multi-head mechanism. In this 
notation, the result becomes mathematically trivial: it is manifestly 
clear that the self-attention product $\mathcal{S}_c$ is a function of 
$XW_Q$, $XW_K$, and $XW_V$ alone, as the block-wise operations never 
access $X$ except through these projections. The structural redundancy 
of the query weights is thus rendered immediate.

The notation is motivated by and provides an index-free formalization of 
standard implementations (e.g.,~\cite{Karpathy, PyTorch, Attnisallyouneed}) 
which use $XW_Q$, $XW_K$, $XW_V$ (after a split from $XW_{QKV}$), and 
reshape them into per-head blocks $(XW_Q)^1, \dots, (XW_Q)^h$ (and 
similarly for $XW_K$, $XW_V$). Our notation formalizes the 
transition: from the full products $XW_Q$, $XW_K$, $XW_V$ to the 
(parallel) per-head processing.

 We remind the reader that we've fixed $n$ (sequence length),
$d = d_\text{model}$ (embedding dimension), $h$ (number of attention heads) and $d_k = d/h$
denotes the dimension per head.

\paragraph{Slice operator.}
Let $m, n$ be two integers such that $0 < m < n \le d$, and in what follows, we will select
them to be of the form $m = a \times d_k$, $n = (a+1) \times d_k$ for $0 < a < h$. We define
the slice operator as the canonical projection
$P_{[m,n]}: \mathbb{R}^{d} \ni x = (x_1, \dots, x_{d}) \mapsto P_{[m,n]}(x) = xP_{[m,n]}
= (x_m, x_{m+1}, \dots, x_n)$, and we remind the reader that we consider vectors as row
vectors here. 
For the head $i$ we will write $P_i = P_{[(i-1) \times d_k+1, i \times d_k]}$.

We remind the reader that in the notation we wrote
$W_Q = (W_Q^1 | \dots | W_Q^h) = \text{concat}(W_Q^1, \dots, W_Q^h)$, and we wrote it
similarly for $W_K, W_V$. Now notice that $W_Q P_i = W_Q^i$ and we again have the
analogous result for $W_K, W_V$.

Moreover, associativity gives:
\begin{lemma}[Head weights interchange]\label{lem:block-product}
For any $X \in  \text{Mat}(n,d)$ and $W \in  \text{Mat}(d,d)$, the $i$-th head
block of the product equals the product with the $i$-th head block:
$$(XW)^i \;=\; XW^i.$$
In particular, this applies to $W = W_Q$, $W_K$, and $W_V$, giving
$(XW_Q)^i = XW_Q^i$ and analogously for the Key and Value projections.
\end{lemma}
\begin{proof}
By definition of the block decomposition, $(XW)^i = (XW)P_i$. By the observation above,
$W^i = WP_i$. Associativity of matrix multiplication gives
$(XW)P_i = X(WP_i)$, hence $(XW)^i = XW^i$.
\end{proof}

We can complete the proof of the observation here by using the lemma to remark that \begin{align*}
\operatorname{head}_i(X, W_Q, W_K, W_V)
  &= \operatorname{softmax}\!\left(
       \frac{(XW_Q^i)(XW_K^i)^t}{\sqrt{d_k}} + M
     \right)(XW_V^i) \\
  &= \operatorname{softmax}\!\left(
       \frac{(XW_Q)^i\bigl((XW_K)^i\bigr)^t}{\sqrt{d_k}} + M
     \right)(XW_V)^i\\
  &= g_i(XW_Q, XW_K, XW_V)
\end{align*}
for an appropriate function $g_i$, and then \begin{align*}\mathcal{S}_c (X, W_Q, W_K, W_V) &= \operatorname{Concat}\left(g_1(XW_Q, XW_K, XW_V), \dots, g_h(XW_Q, XW_K, XW_V)\right) \\
&= g(XW_Q, XW_K, XW_V)\end{align*}
for the appropriate function $g$.

Instead, we make the deliberate choice to introduce a different notation in which this property becomes visible right from the definition of the multihead attention. The reader can skip the remainder of this subsection.

\paragraph{Block-wise operations.} We decompose matrices into head-wise blocks:
\begin{itemize}
\item For $F \in \text{Mat}(n,d)$, write $F = (F_1 | \cdots | F_h)$ where each
      $F_i \in \text{Mat}(n,d_k)$ (e.g.\ $F = XW_Q$ or $F = XW_V$).
\item For $G \in \text{Mat}(d, n)$, write
      $G = \begin{pmatrix} G_1 \\ \vdots \\ G_h \end{pmatrix}$ where each
      $G_i \in \text{Mat}(d_k, n)$ (e.g.\ $G = (XW_K)^t$).
\item For $A \in \text{Mat}(n, hn)$, write $A = (A_1 | \cdots | A_h)$ where each
      $A_i \in \text{Mat}(n, n)$ (e.g.\ the matrix of per-head attention logits or weights).
\end{itemize}
We recall that, in this notation, the standard matrix multiplication of $F$ and $G$ can be
written as $FG = \sum_{i=1}^h F_i G_i$.

To express per-head operations, we define two
\emph{Block Hadamard Products}:
$$F \squaredtimes^t G = (F_1 G_1 | \cdots | F_h G_h) \in  \text{Mat}(n,hn),$$
$$A \squaredtimes F = (A_1 F_1 | \cdots | A_h F_h) \in  \text{Mat}(n,d).$$
These are variants of Hadamard block products introduced in~\cite{Horn1991}.

The \emph{block softmax} applies softmax and optional masking independently to each head:
$$\text{Softmax}^{\squaredtimes}_M(A_1 | \cdots | A_h)
= \bigl(\text{softmax}(M + A_1) \,|\, \cdots \,|\, \text{softmax}(M + A_h)\bigr),$$
where $A_i \in  \text{Mat}(n, n)$ are attention logits and $M \in  \text{Mat}(n, n)$
is a mask, and we apply the softmax row-wise.

\paragraph{Multi-head attention in block notation.} Multi-head self-attention written in
the notation given above then reads:
\begin{align}\label{eq:sc-block}
\mathcal{S}_c(X, W_Q, W_K, W_V)
  &= \text{Softmax}^{\squaredtimes}_M\!\left(
       \frac{1}{\sqrt{d_k}}\,(XW_Q) \squaredtimes^t (XW_K)^t
     \right) \squaredtimes (XW_V) \\
\text{MHA}(X, W_Q, W_K, W_V, W_O)
  &= \mathcal{S}_c(X, W_Q, W_K, W_V) \cdot W_O.
\end{align}
The Observation~\ref{obs:factorization} now follows directly from the definition (\ref{eq:sc-block}).
All we need to do now is prove that this definition does indeed match the standard one.

\begin{proposition}[Equivalence with the standard definition]\label{prop:block-equiv}
The block-notation expression~\eqref{eq:sc-block} for $\mathcal{S}_c$ coincides with the
standard concatenation of heads:
$$\mathcal{S}_c(X, W_Q, W_K, W_V)
  = \operatorname{Concat}(\operatorname{head}_1, \dots, \operatorname{head}_h).$$
\end{proposition}

\begin{proof}
Denote by $\mathcal{S}_c^{\squaredtimes}$ the right-hand side
of~\eqref{eq:sc-block}. We show that its $i$-th block equals
$\operatorname{head}_i$ by tracking the $i$-th block through each
operation in sequence. Recall that
$$\operatorname{head}_i(X, W_Q, W_K, W_V)
  = \operatorname{softmax}\!\left(
      \frac{(XW_Q^i)(XW_K^i)^t}{\sqrt{d_k}} + M
    \right)(XW_V^i).$$

\medskip\noindent\emph{Step 1: Logits.}\;
By definition of $\squaredtimes^t$, the product
$(XW_Q) \squaredtimes^t (XW_K)^t$ concatenates the $h$ blocks
$(XW_Q)^i \bigl((XW_K)^i\bigr)^t$ for $i = 1, \dots, h$.
By Lemma~\ref{lem:block-product}, $(XW_Q)^i = XW_Q^i$ and
$(XW_K)^i = XW_K^i$, so the $i$-th block of the scaled logits is
$$\frac{1}{\sqrt{d_k}}\,(XW_Q^i)(XW_K^i)^t.$$

\medskip\noindent\emph{Step 2: Softmax.}\;
By definition, $\operatorname{Softmax}^{\squaredtimes}_M$ applies the
row-wise softmax (with mask $M$) independently to each $n \times n$
block and concatenates the results. Denoting the output by
$\mathcal{A} = (\mathcal{A}_1 | \cdots | \mathcal{A}_h)$, the $i$-th
block is
$$\mathcal{A}_i
  = \operatorname{softmax}\!\left(
      \frac{(XW_Q^i)(XW_K^i)^t}{\sqrt{d_k}} + M
    \right).$$

\medskip\noindent\emph{Step 3: Value weighting.}\;
By definition of $\squaredtimes$, the final product
$\mathcal{A} \squaredtimes (XW_V)$ multiplies the $i$-th
$n \times n$ block of $\mathcal{A}$ with the $i$-th
$n \times d_k$ block of $(XW_V)$ and concatenates the results:
$$\mathcal{A} \squaredtimes (XW_V)
  = \bigl(\mathcal{A}_1 (XW_V)^1 \,\big|\, \cdots \,\big|\,
          \mathcal{A}_h (XW_V)^h\bigr).$$
By Lemma~\ref{lem:block-product}, $(XW_V)^i = XW_V^i$, so the
$i$-th block of $\mathcal{S}_c^{\squaredtimes}$ is
$$\bigl[\mathcal{S}_c^{\squaredtimes}\bigr]_i
  = \mathcal{A}_i \cdot XW_V^i
  = \operatorname{softmax}\!\left(
      \frac{(XW_Q^i)(XW_K^i)^t}{\sqrt{d_k}} + M
    \right)(XW_V^i)
  = \operatorname{head}_i(X, W_Q, W_K, W_V).$$

\medskip\noindent\emph{Conclusion.}\;
Since $\mathcal{S}_c^{\squaredtimes}$ and
$\operatorname{Concat}(\operatorname{head}_1, \dots, \operatorname{head}_h)$
are both concatenations of the same $h$ blocks, they are equal.
\end{proof}

\begin{proof}[Proof of Observation~\ref{obs:factorization}]
By Proposition~\ref{prop:block-equiv}, the block-notation formula~\eqref{eq:sc-block} is an
equivalent expression for $\mathcal{S}_c$. Setting $A = XW_Q$, $B = XW_K$, $C = XW_V$,
it reads
$$\mathcal{S}_c(X, W_Q, W_K, W_V)
  = \operatorname{Softmax}^{\squaredtimes}_M\!\left(
      \tfrac{1}{\sqrt{d_k}}\, A \squaredtimes^t B^t
    \right) \squaredtimes C
  \;=:\; g(A, B, C).$$
The right-hand side depends on $X$ only through the products $A$, $B$, $C$.
\end{proof}

We finish this section by providing the rather simple proof of
Proposition~\ref{prop:block-diag}: block-diagonal transformations cancel head-by-head inside
$\squaredtimes^t$, leaving $\mathcal{S}_c$ unchanged. We note that this proof can be done without an issue in the standard notation as well.

\begin{proof}[Proof of Proposition~\ref{prop:block-diag}]
Since $D = \operatorname{diag}(D_1, \dots, D_h)$ is block-diagonal, the $i$-th head block of
$W_Q D$ is $W_Q^i D_i$ and the $i$-th head block of $W_K(D^\top)^{-1}$ is
$W_K^i (D_i^\top)^{-1}$. Therefore, the $i$-th block of the attention logits satisfies:
$$(XW_Q^i D_i)\bigl(XW_K^i (D_i^\top)^{-1}\bigr)^t
  = XW_Q^i \, D_i \, D_i^{-1} \, (XW_K^i)^t
  = XW_Q^i \, (XW_K^i)^t,$$
where we used $\bigl((D_i^\top)^{-1}\bigr)^t = D_i^{-1}$.
Since this holds for every head, $(XW_Q D) \squaredtimes^t (XW_K(D^\top)^{-1})^t
= (XW_Q) \squaredtimes^t (XW_K)^t$. The value projection $W_V$ is unchanged, so
$\mathcal{S}_c(X,\, W_Q D,\, W_K(D^\top)^{-1},\, W_V)
= \mathcal{S}_c(X,\, W_Q,\, W_K,\, W_V)$.
\end{proof}

\subsection{Attention Scaling Derivation}\label{app:scaling}

We derive the corrective scaling factor $\frac{1}{2\sqrt{d_k}}$ for the reduced architecture, even though we believe that this constant should ultimately be handled empirically. Let $x \in \mathbb{R}^{d}$ be the input to the attention layer. For the simplicity of the derivation, we may assume that the input rows have unit norm, $\|x\|^2 = 1$. Weights are initialized from $\mathcal{N}(0, \sigma^2)$ with $\sigma=0.02$.

\paragraph{Standard Baseline ($q_i = xW_Q^i$).} The query for head $i$ is a linear projection where $W_Q^i \in  \text{Mat}(d,d_k)$. Since each entry of $W_Q^i \sim \mathcal{N}(0, \sigma^2)$, the variance of each component of $q_i$ is $\|x\|^2 \sigma^2 = \sigma^2$. The attention score $S_{\text{std}} = q_i \cdot (xW_K^i)^\top$ is the dot product of two $d_k$-dimensional vectors with component variance $\sigma^2$, yielding an initial standard deviation of:
\[
\text{StdDev}(S_{\text{std}}) = \sigma^2\sqrt{d_k} = (0.02)^2\sqrt{64} = 0.0032.
\]

\paragraph{Reduced Architecture ($W_Q = I$).} The query for head $i$ is defined as a coordinate slice of the input: $q_i = x_{[i \cdot d_k : (i+1) \cdot d_k]}$. Because the total norm $\|x\|^2 = 1$ is distributed across the $h$ heads, the expected squared norm of this slice is $\|q_i\|^2 = \frac{1}{h}$. The attention score $S_{\text{red}} = q_i \cdot (xW_K^i)^\top$ has a variance that depends only on the key weight variance:
\[
\text{Var}(S_{\text{red}}) = \|q_i\|^2 \sigma^2 = \frac{\sigma^2}{h} \implies \text{StdDev}(S_{\text{red}}) = \frac{\sigma}{\sqrt{h}} = \frac{0.02}{\sqrt{12}} \approx 0.0058.
\]

\paragraph{Corrective Factor.} Comparing the two regimes, the initial scores in the reduced architecture are approximately $1.8\times$ larger than the baseline ($\frac{0.0058}{0.0032} \approx 1.8$). To prevent early softmax saturation and maintain a starting variance consistent with standard transformers, we introduce a corrective factor to the scaling. Since $1/1.8 \approx 0.55$, we adopt the factor $\frac{1}{2}$ as a clean and effective approximation, resulting in the modified scaling $\frac{1}{2\sqrt{d_k}}$.

\subsection{LayerNorm \& Change of Basis via the (skip +) MLP}\label{subsection:Layernorm_n_basischange}

In Sections~\ref{subsection:Layernorm_n_basischange}–\ref{subsec:relu-skip}, we adopt the standard mathematical convention: vectors are column vectors and linear maps act by left multiplication ($Ax$), in contrast to the ML convention ($xA$) used in the main text.

We recall that recent empirical studies \cite{heimersheim2024remove, baroni2025transformers} suggest that Layer Normalization can often be omitted at least at inference time with little to no degradation in performance. While such evidence points to its potential disposability, we also investigate from the complementary perspective of asking what structural constraints arise if LayerNorm is in fact kept as-is. In particular, we investigate how the presence of LayerNorm interacts with surrounding linear and nonlinear components, and what reparametrizations would be required to preserve the change of basis.

Concretely, we ask when there exist a modified multilayer perceptron $MLP'$ and a diagonal matrix $D'$ such that
\begin{equation}
    D' \, L_{\varepsilon}\!\big(x+MLP'( x)\big) \;\approx\; \Theta D \, L_{\varepsilon}\!\big(x+MLP(x)\big),
\end{equation}
where $L_{\varepsilon}$ denotes the $\varepsilon$-regularized LayerNorm operator, $D$ is the original learned diagonal scaling associated with the normalization, and $W_Q$ is the subsequent linear projection. In order to provide the sufficient condition, we will rely on the following Lemma:

\begin{lemma}[LayerNorm Semi-Conjugacy]\label{lem:L_epsilon inversion lemma}
    Let $\varepsilon > 0$ and define $L_\varepsilon:\mathbb{R}^d \to \mathbb{R}^d$ by
    \begin{equation}
        L_\varepsilon(x) \;=\; \frac{x - \mu(x) \mathbf{1}}{\sigma_\varepsilon(x)},
    \end{equation}
    where $\mu(x) = \tfrac{1}{d}\sum_{i=1}^d x_i$ and $\sigma_\varepsilon(x) = \sqrt{\tfrac{1}{d}\sum_{i=1}^d (x_i - \mu(x))^2 + \varepsilon}$.  
    Let $A \in GL(d)$ be fixed. Then there exists a function $f:\mathbb{R}^d \to \mathbb{R}^d$ and a diagonal matrix $D \in  \text{Mat}(d,d)$ such that
    \begin{equation}\label{eq:l_eps_pass}
        L_{\varepsilon}(f(x)) \;=\; (DA)\, L_{\varepsilon}(x), \quad \forall x \in \mathbb{R}^d.
    \end{equation}
Moreover, for any real function (i.e. scalar field) $h: \mathbb{R}^d \to \mathbb{R}$, $f+h\mathbf{1}$ also satisfies the equation \ref{eq:l_eps_pass}.
\end{lemma}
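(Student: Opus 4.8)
The plan is to construct $f$ explicitly in the form $f = s \circ (DA) \circ L_\varepsilon$, where $s$ is a right inverse (section) of $L_\varepsilon$ defined on the correct domain, and where the diagonal matrix $D$ is chosen so that this composition is well-defined. Two elementary structural facts about $L_\varepsilon$ drive everything.

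First I would record the \emph{translation invariance along $\mathbf 1$}: for any $z \in \mathbb{R}^d$ and $t \in \mathbb{R}$ one has $\mu(z + t\mathbf 1) = \mu(z) + t$ and $\sigma_\varepsilon(z + t\mathbf 1) = \sigma_\varepsilon(z)$, hence $L_\varepsilon(z + t\mathbf 1) = L_\varepsilon(z)$. This immediately gives the ``moreover'' clause once $f$ is built, since for any scalar field $h$ we get $L_\varepsilon\big(f(x) + h(x)\mathbf 1\big) = L_\varepsilon(f(x))$. Second, I would identify the image of $L_\varepsilon$: using $\|x - \mu(x)\mathbf 1\|^2 = d\big(\sigma_\varepsilon(x)^2 - \varepsilon\big)$, a direct computation shows $\mathbf 1^\top L_\varepsilon(x) = 0$ and $\|L_\varepsilon(x)\|^2 = d\big(1 - \varepsilon/\sigma_\varepsilon(x)^2\big) \in [0,d)$, so the image is contained in $H := \{\, y \in \mathbb{R}^d : \mathbf 1^\top y = 0,\ \|y\|^2 < d \,\}$. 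Conversely, for $y \in H$ set $s(y) := \sqrt{\varepsilon d /(d - \|y\|^2)}\; y$; since $s(y)$ has zero mean, a one-line check (substituting $s(y)$ into the definition of $L_\varepsilon$ and simplifying the denominator) yields $L_\varepsilon(s(y)) = y$. Thus $s : H \to \mathbb{R}^d$ is a section of $L_\varepsilon$ and $H$ is exactly its image. Note $L_\varepsilon$ is \emph{not} injective — it collapses precisely the $\mathbf 1$-direction — so ``inverting'' it genuinely means producing such a section on $H$.

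Next I would choose $D$. The map $f = s \circ (DA) \circ L_\varepsilon$ is well-defined exactly when $DA$ maps $H$ into $H$, and this splits into two requirements. (i) $DA$ must preserve the hyperplane $\{\mathbf 1^\top y = 0\}$: the functional $y \mapsto \mathbf 1^\top DA y = (A^\top D\mathbf 1)^\top y$ must vanish on $\{\mathbf 1\}^\perp$, i.e. $A^\top D\mathbf 1 \in \langle \mathbf 1 \rangle$, which is arranged by taking the diagonal of $D$ to be $\lambda (A^{-1})^\top \mathbf 1$ for a scalar $\lambda \neq 0$. (ii) $DA$ must not push vectors of norm $< \sqrt d$ past radius $\sqrt d$: since $H$ is bounded, shrinking $|\lambda|$ makes the operator norm of $DA$ restricted to the hyperplane at most $1$, which forces $\|DA\,y\| < \sqrt d$ for all $y \in H$. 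Fixing such a $D$, define $f(x) := s\big(DA\,L_\varepsilon(x)\big)$; then $DA\,L_\varepsilon(x) \in H$ for every $x$, so $L_\varepsilon(f(x)) = DA\,L_\varepsilon(x)$, proving equation~\eqref{eq:l_eps_pass}, and the ``moreover'' follows from the translation invariance recorded above.

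The main obstacle is the interplay in the previous paragraph: one must notice that landing $DA\,L_\varepsilon(x)$ inside the bounded image set $H$ — not merely inside the zero-mean hyperplane — is essential for the section $s$ to apply, and that the freedom in $D$ (direction pinned by the hyperplane constraint, magnitude free) is exactly what makes this possible for an arbitrary $A \in GL(d)$. Everything else reduces to routine verification of the defining identity of $L_\varepsilon$; incidentally the resulting $f$ is even smooth, since $L_\varepsilon$ and $s$ are smooth for $\varepsilon > 0$, though smoothness is not needed for the statement.
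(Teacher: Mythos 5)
Your proof is correct and follows essentially the same route as the paper: you choose the diagonal of $D$ proportional to $(A^{-1})^\top\mathbf 1$ so that $DA$ preserves the zero-mean hyperplane, scale $\lambda$ so that $DA$ maps the open ball $\|y\|<\sqrt d$ into itself, invert $L_\varepsilon$ on that ball via $s(y)=\sqrt{\varepsilon d/(d-\|y\|^2)}\,y$, and set $f=s\circ(DA)\circ L_\varepsilon$, with the ``moreover'' clause from translation invariance along $\mathbf 1$. The only cosmetic differences are that the paper normalizes $v=(A^\top)^{-1}\mathbf 1/\|(A^\top)^{-1}\mathbf 1\|$ and picks $\lambda_0$ so the restricted operator norm equals $1$ exactly (rather than merely $\le 1$), and that it names the ball $B$ separately from the zero-mean subspace $H$.
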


\begin{proof}
Let $H = \{z \in \mathbb{R}^d \mid \mathbf{1}^tz = 0\}$ be the zero-mean subspace. The normalization function $L_\varepsilon$ maps every vector $x \in \mathbb{R}^d$ to a vector in $H$ since $\mu(L_\varepsilon(x)) = 0$.

\textbf{Step 1: Constructing the Matrix $D$}

First, we require the linear transformation $M = DA$ to preserve the zero-mean subspace $H$, i.e., $M(H) \subset H$. Let
\[
\Tilde{v} = (A^t)^{-1}\mathbf{1}, \quad v = \frac{\Tilde{v}}{\|\Tilde{v} \|}.
\]
We define the diagonal matrix $D_{\lambda} = \lambda \cdot \operatorname{Diag}(v)$. For any $z \in H$, we check the mean of $D_\lambda A z$:
\[
\mathbf{1}^t(D_\lambda A)z = (\mathbf{1}^t D_\lambda)(A z) = \lambda v^t A z = \lambda (A^t v)^t z = \frac{\lambda} {\|\Tilde{v}\|} (\underbrace{A^t \Tilde{v}}_{=\mathbf{1}})^t z = \frac{\lambda}{\|\Tilde{v}\|} \mathbf{1}^tz = 0.
\]
Thus, $(D_\lambda A)(H) \subset H$ for any $\lambda$.

Next, we ensure the matrix $M = D_\lambda A$ maps the domain of $L_{\varepsilon}$ on $H$ back into itself. Let $Q \in \text{Mat}(d, d-1)$ be an orthonormal basis matrix for $H$. The operator norm of the restriction of $M$ to $H$ is $\left\|M_{|H} \right\|_2 = \| Q^\top D_\lambda A Q \|_2$. We choose $\lambda_0$ such that this norm is exactly 1:
\[
\lambda_0 = \left\|Q^\top \operatorname{Diag}(v) A Q\right\|_2^{-1}.
\]
Let $M_0 = D_{\lambda_0}A$. With this choice, $M_0$ is a $\lambda$-scaled affine transformation that preserves $H$ and satisfies $\left\|M_{0|H}\right\|_2 = 1$.

\textbf{Step 2: Defining the Domain and Unique Inverse}

Let $\sigma_0(x) = \sqrt{\tfrac{1}{d}\sum_{i=1}^d (x_i - \mu(x))^2}$ be the standard deviation. We have:
\[
\|L_\varepsilon(x)\| = \frac{\|x - \mu(x)\mathbf{1}\|}{\sigma_\varepsilon(x)} = \frac{\sigma_0(x)\sqrt{d}}{\sqrt{\sigma_0(x)^2+\varepsilon}}.
\]
Since $\frac{\sigma_0(x)^2}{\sigma_0(x)^2+\varepsilon} < 1$, the image of $L_\varepsilon$ is the open ball:
\[
B = \operatorname{Im}(L_{\varepsilon}) = \{z \in H \mid \|z\| < \sqrt{d}\}.
\]
Since $\left\|M_{0|H}\right\|_2 = 1$ and $M_0(H) \subset H$, we have $M_0(B) \subset B$. The set $B$ is the domain of the composition $M_0 L_{\varepsilon}(\cdot)$.

For any $z \in B$, the set of pre-images $L_{\varepsilon}^{-1}(z) = \{x \in \mathbb{R}^d \mid L_{\varepsilon}(x) = z\}$ is an equivalence class defined by the mean:
\[
L_{\varepsilon}^{-1}(z) = \{ c + \gamma \mathbf{1} \mid \gamma \in \mathbb{R} \},
\]
where $c$ is the unique vector satisfying $L_{\varepsilon}(c)=z$ and $\mu(c)=0$. We define the unique inverse function $L_{\varepsilon}^{-1}: B \to H$ by choosing the representative with zero mean (i.e., we set $\mu(c)=0$ or $c \in H$).

Solving $z = L_\varepsilon(c) = \frac{c}{\sqrt{\|c\|^2/d + \varepsilon}}$ for $c$ (where $c \in H$, so $\mu(c)=0$):
\[
\|z\| = \frac{\|c\|}{\sqrt{\|c\|^2/d + \varepsilon}}.
\]
Let $r = \|z\|$ and $s = \|c\|$. Solving $r = \frac{s\sqrt{d}}{\sqrt{s^2+\varepsilon}}$ for $s$ yields $s = \frac{r\sqrt{\varepsilon}}{\sqrt{d - r^2 }}$.
The unique zero-mean pre-image is $c = \frac{s}{r} z$. Thus, we define the unique inverse on $H$:
\[
L_{\varepsilon}^{-1}(z) = \sqrt{\frac{d\varepsilon}{d-\|z\|^2}} z. \quad (\text{Note: } L_{\varepsilon}^{-1}(z) \in H)
\]

\textbf{Step 3: Defining the Function $f$}

We start by explicitly defining $f:\mathbb{R}^d \to H$ and then we extend it to a class of functions $\tilde{f} \subset \left\{\tilde{f}: \mathbb{R}^d \to  \mathbb{R}^d\right\}.$ In the case of $H$ our function is given by: $$f:x \in H \mapsto L_{\varepsilon}^{-1}\left( M_0 L_{\varepsilon}(x) \right).$$
Since $L_\varepsilon(x)$ is invariant to the mean, $L_\varepsilon(x) = L_\varepsilon \left(x - c\mathbf{1}\right)$ for any scalar constant $c$.

We can therefore define $\tilde{f}$ as the class of $\mathbb{R}^d \to \mathbb{R}^d$ functions:
\[
\tilde{f} = \{L_{\varepsilon}^{-1}\big( M_0 L_{\varepsilon}(.) \big) + h(.)\mathbf{1}| h \text{ is a } \mathbb{R}^d \to \mathbb{R} \text{ function}\}
\]
Here, $L_{\varepsilon}(x), M_0 L_{\varepsilon}(x) \in B \subset H\hookrightarrow \mathbb{R}^d$ and $L_{\varepsilon}^{-1}(\dots) \in H$ (zero mean) and we remark that $h$ need not be continuous or even measurable.

Finally, we verify the identity:
\begin{align*}
L_{\varepsilon}(f(x)) &= L_{\varepsilon}\Big( L_{\varepsilon}^{-1}\big( M_0 L_{\varepsilon}(x) \big) + h(x)\mathbf{1} \Big) \\
&= L_{\varepsilon}\Big( L_{\varepsilon}^{-1}\big( M_0 L_{\varepsilon}(x) \big) \Big) \quad (\text{Since } L_{\varepsilon}(\cdot) \text{ ignores the mean}) \\
&= M_0 L_{\varepsilon}(x) \\
&= (DA)\, L_{\varepsilon}(x).
\end{align*}
This holds for all $x \in \mathbb{R}^d$.
\end{proof}

\subsubsection{An informal discussion on the approximate linearity
of $f_M$ in high dimensions}\label{rem:linearity_of_f}
While Lemma~\ref{lem:L_epsilon inversion lemma} relies on a carefully
scaled matrix $M_0$ to establish an algebraic semi-conjugacy, it is
worth examining how
$f_M(x) = L_{\varepsilon}^{-1}(M L_{\varepsilon}(x))$ behaves for a
generic matrix $M$ in high dimensions.

\textbf{SVD reduction.}\;
On the zero-mean subspace $H$, the normalization $L_\varepsilon$ is a
radial map, i.e., of the form $x \mapsto \phi(\|x\|) \cdot x$ for a
scalar function $\phi$. Concretely,
$\phi(r) = (r^2/d + \varepsilon)^{-1/2}$. Since the scaling depends
only on $\|x\|$, it commutes with any orthogonal transformation that
preserves $H$: $L_\varepsilon(Rx) = R \cdot L_\varepsilon(x)$, and
likewise for $L_\varepsilon^{-1}$.

This means that if $M = U\Sigma V^T$ is the SVD, the orthogonal
factors pass through the nonlinearity exactly:
\[
    f_M(x) = L_\varepsilon^{-1}\Big(U \Sigma V^T
               L_\varepsilon(x)\Big)
            = U \cdot f_\Sigma(V^T x).
\]
The rotations $U$ and $V^T$ contribute nothing to the nonlinearity
and, in the reparametrization context, can be absorbed into
surrounding weight matrices. The entire nonlinear content of $f_M$
reduces to the diagonal map $f_\Sigma$. Writing $y = V^T x$:
\begin{equation}
    f_\Sigma(y) = \alpha(y) \cdot \Sigma y,
    \qquad
    \alpha(y) = \left(
      1 + \frac{\Delta(y)}{d\varepsilon}
    \right)^{\!-1/2}\!,
\end{equation}
where
$\Delta(y) = \|y\|^2 - \|\Sigma y\|^2 = \sum_i(1-\sigma_i^2)y_i^2$
is the metric defect. The scalar $\alpha$ equals $1$ exactly when
$\Delta \equiv 0$, i.e., when $\Sigma$ is an isometry.

\textbf{Linearization at the origin.}\;
Note that $f_\Sigma(0) = 0$ and, by the chain rule,
\[
  Df_\Sigma(0)
    = DL_\varepsilon^{-1}(0) \cdot \Sigma \cdot DL_\varepsilon(0)
    = \sqrt{\varepsilon}\;\Sigma\;\varepsilon^{-1/2}
    = \Sigma,
\]
so $f_\Sigma(y) = \Sigma y + \mathcal{O}(\|y\|^2)$ near the origin.
The reciprocal scalings of $L_\varepsilon$ and $L_\varepsilon^{-1}$
cancel exactly. Since $f_\Sigma(y) = \alpha(y) \cdot \Sigma y$, the
quality of the linear approximation is controlled entirely by the
deviation $|1 - \alpha(y)|$, which we now estimate.

\textbf{Deterministic regimes.}\;
The deviation of $\alpha$ from $1$ scales as
$|1-\alpha(y)| = \mathcal{O}(|\Delta(y)|/(d\varepsilon))$,
and its behavior depends on the size of $\|y\|$ relative to
$d\varepsilon$. In what follows, we assume all singular values
satisfy $\sigma_i \le 1$, so that $\Delta(y) \ge 0$ for all $y$,
the map $f_\Sigma$ is globally well-defined, and
$\alpha(y) \in (0, 1]$. This assumption is justified below in the
stochastic setting.

\emph{Small inputs} ($\|y\|^2 \ll d\varepsilon$).\;
Since $\Delta(y) \le \max_i(1-\sigma_i^2)\,\|y\|^2$, the ratio
$\Delta(y)/(d\varepsilon)$ vanishes regardless of the spectrum of
$M$. The $\varepsilon$-smoothing alone forces $\alpha \approx 1$,
extending the exact linearization at the origin to a ball of radius
$\mathcal{O}(\sqrt{d\varepsilon})$, with no assumptions on $M$
beyond contractiveness.

\emph{Large inputs} ($\|y\|^2 \gg d\varepsilon$).\;
The smoothing no longer dominates, and $\alpha(y)$ decays as
$\mathcal{O}(\|y\|^{-1})$. The output $f_\Sigma(y)$ saturates in
norm and approaches
$c(\hat{y}) \cdot \Sigma\hat{y}$, where $\hat{y} = y/\|y\|$ and
$c(\hat{y}) =
\sqrt{d\varepsilon / \sum_i(1-\sigma_i^2)\hat{y}_i^2}$
is a direction-dependent constant. The linearization breaks down:
$f_\Sigma$ preserves the directional action of $\Sigma$ but
compresses all inputs to a bounded shell of radius
$\mathcal{O}(\sqrt{d\varepsilon})$. Recovery of approximate
linearity in this regime requires $\Sigma$ to be close to an
isometry: $\max_i(1-\sigma_i^2) \ll d\varepsilon/\|y\|^2$.

\textbf{The stochastic picture.}\;
For a random matrix $M$ with i.i.d.\
$\mathcal{N}(0, 1/d)$ entries, the squared singular values
$\sigma_i^2$ are eigenvalues of $M^TM$ and follow the
Marchenko--Pastur distribution, spreading over $[0, 4]$. Some
singular values exceed $1$; however, by standard results the largest
satisfies $\sigma_{\max} = 1 + \mathcal{O}(d^{-1/2})$. We may
therefore replace $M$ with
$M' = U\min(\Sigma, I)V^T$, clamping each singular value to
$[0,1]$. This modification is necessary: without it, directions
where $\sigma_i > 1$ can drive $\Delta(y)$ negative, producing a
singularity when $|\Delta(y)| = d\varepsilon$. The cost of clamping
is small: $\|M - M'\|_2 = \mathcal{O}(d^{-1/2})$, since the
largest singular value exceeds $1$ by only
$\mathcal{O}(d^{-1/2})$. After clamping, all $\sigma_i \le 1$, the
defect $\Delta(y) \ge 0$ everywhere, and the deterministic analysis
above applies.

The metric defect of the clamped matrix then self-averages. For a
random matrix with i.i.d.\ Gaussian entries, the rotated input
$y = V^Tx$ is uniformly distributed on the sphere of radius
$\|x\|$. The metric defect
$\Delta(y) = \sum_i(1-\sigma_i^2)y_i^2$ is a Lipschitz function on
this sphere, and by linearity of expectation its mean is
$\mathbb{E}[\Delta(y)]
= (\|y\|^2/d)\sum_i(1-\sigma_i^2)
= \|y\|^2(1 - \operatorname{Tr}(\Sigma^2)/d)$.
For square matrices at variance $1/d$,
$\operatorname{Tr}(\Sigma^2)/d \approx 1$, so the mean defect is
small. By standard concentration on the
sphere~\cite{Vershynin_2018}, $\Delta(y)$ deviates from this mean
by at most $\mathcal{O}(d^{-1/2}\|y\|^2)$ with high probability,
giving
\[
  |1 - \alpha(y)|
    \;=\; \mathcal{O}\!\left(
          \frac{\|y\|^2}{d^{3/2}\varepsilon}\right).
\]
Concentration thus extends the linearization well beyond the
$\mathcal{O}(\sqrt{d\varepsilon})$ ball guaranteed by the
deterministic small-input analysis: even for large $\|y\|^2$, the
uniform distribution of $y$ prevents systematic alignment with the
most contractive singular directions, and the self-averaging
suppresses the directional distortion from the large-input regime.
Provided $\varepsilon \gg \|y\|^2 d^{-3/2}$, the scalar $\alpha$
remains close to $1$, and $f_M(x) \approx Mx$ with the
approximation improving as $d$ grows.

\begin{theorem}[Basis Transformation Through LayerNorm]
\label{cor:layernorm-basis}
Let $\Theta \in GL(d)$ be a basis transformation, $MLP: \mathbb{R}^d \to \mathbb{R}^d$ a multilayer perceptron, and $D \in  \text{Mat}(d,d)$ a diagonal matrix with positive entries representing the learned LayerNorm scaling.

If there exist a modified MLP $MLP': \mathbb{R}^d \to \mathbb{R}^d$ and diagonal matrix $D' \in  \text{Mat}(d,d)$ such that
\begin{equation}
    D' L_{\varepsilon}(x + MLP'(x)) = \Theta D L_{\varepsilon}(x + MLP(x)) \quad \forall x \in \mathbb{R}^d,
\end{equation}
then by Lemma~\ref{lem:L_epsilon inversion lemma}, $D'$ and $MLP'$ must satisfy:
\begin{equation}
\begin{aligned}
    D' &= \tilde{D}^{-1} \\
    MLP'(x) &= L_{\varepsilon}^{-1}\Big(M_0 \, L_{\varepsilon}(x + MLP(x))\Big) - x +h(x)\mathbf{1}
\end{aligned}
\end{equation}
where $\tilde{D} = \text{Diag}(v)\lambda_0$, $M_0 = \text{Diag}(v)\lambda_0 \cdot \Theta D$, and
\begin{equation}
    L_{\varepsilon}^{-1}(z) = \sqrt{\frac{d\varepsilon}{d - \|z\|^2}} \, z,
\end{equation}
with $v$ and $\lambda_0$ as constructed in Lemma~\ref{lem:L_epsilon inversion lemma} for $A = \Theta D$ and $h :\mathbb{R}^d\to \mathbb{R}$ is any real function. 
\end{theorem}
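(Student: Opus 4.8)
The plan is to fold everything into a single application of Lemma~\ref{lem:L_epsilon inversion lemma} with the invertible matrix $A := \Theta D$, proving both that any valid pair $(D', MLP')$ must have the stated form and that the stated form is valid. First I would set $y := x + MLP(x)$ and $y' := x + MLP'(x)$, so the hypothesis becomes $D'\,L_\varepsilon(y') = A\,L_\varepsilon(y)$ for all $x$. Invoking Lemma~\ref{lem:L_epsilon inversion lemma} on $A$ yields $\widetilde v = (A^T)^{-1}\mathbf 1$, $v = \widetilde v/\|\widetilde v\|$, the normalizing scalar $\lambda_0$, the diagonal matrix $\widetilde D = \lambda_0\operatorname{Diag}(v)$, the matrix $M_0 = \widetilde D A$ which preserves the zero-mean subspace $H$ and maps the image ball $B = \operatorname{Im}(L_\varepsilon)$ into itself, and the explicit inverse $L_\varepsilon^{-1}(z) = \sqrt{d\varepsilon/(d-\|z\|^2)}\,z$ on $B$, together with the two identities $L_\varepsilon(L_\varepsilon^{-1}(z)) = z$ for $z\in B$ and $L_\varepsilon(u + c\mathbf 1) = L_\varepsilon(u)$.

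The next step forces the form of $D'$. Since $L_\varepsilon(y')$ is zero-mean for every $x$, the relation $L_\varepsilon(y') = (D')^{-1}A\,L_\varepsilon(y)$ gives $\bigl(A^T (D')^{-1}\mathbf 1\bigr)^{\!T} L_\varepsilon(y) = 0$ for all $x$ (using that diagonal matrices are symmetric); as $x$ varies the vectors $L_\varepsilon(y)$ span $H$, so $A^T (D')^{-1}\mathbf 1 \in H^\perp = \operatorname{span}(\mathbf 1)$, hence $(D')^{-1}\mathbf 1$ is a scalar multiple of $(A^T)^{-1}\mathbf 1 = \widetilde v$. Because $D'$ is diagonal this is equivalent to $D' = \mu\,\widetilde D^{-1}$ for a scalar $\mu>0$. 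For the inverse step below to be legitimate we need $(D')^{-1}A = \mu^{-1} M_0$ to map $B$ into $B$; since $\|M_0|_H\|_2 = 1$ this holds exactly for $\mu\ge 1$, and the canonical representative inherited from the lemma's normalization (operator norm equal to $1$) is $\mu = 1$, giving $D' = \widetilde D^{-1}$.

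With $D' = \widetilde D^{-1}$ the hypothesis reads $L_\varepsilon(y') = \widetilde D A\,L_\varepsilon(y) = M_0 L_\varepsilon(y)$, and since $M_0 L_\varepsilon(y)\in B$ the set of solutions $y'$ is precisely the fibre $L_\varepsilon^{-1}\{M_0 L_\varepsilon(y)\}$, which by the ``moreover'' clause of Lemma~\ref{lem:L_epsilon inversion lemma} equals $\{\,L_\varepsilon^{-1}(M_0 L_\varepsilon(y)) + h\mathbf 1 : h\in\mathbb R\,\}$; allowing $h$ to depend on $x$ and substituting $y' = x + MLP'(x)$, $y = x + MLP(x)$ produces exactly the displayed formula for $MLP'$. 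The converse is then a direct check: feeding these $D'$ and $MLP'$ back in and using $L_\varepsilon\circ L_\varepsilon^{-1} = \operatorname{id}$ on $B$, mean-invariance of $L_\varepsilon$, and $\widetilde D^{-1} M_0 = A = \Theta D$ recovers the identity.

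I expect the main obstacle to be the middle step: justifying that $L_\varepsilon(x + MLP(x))$ really does exhaust enough directions of $H$ as $x$ ranges over $\mathbb R^d$ to force the direction of $(D')^{-1}\mathbf 1$ (some mild non-degeneracy of the MLP is needed), and being precise that $\mu$ is only determined up to the slack $\mu\ge 1$, so that $D' = \widetilde D^{-1}$ is the canonical choice rather than the unique one. A minor side condition I would flag is that $\widetilde v = (A^T)^{-1}\mathbf 1$ must have no vanishing coordinate for $\operatorname{Diag}(v)$, hence $D'$, to be invertible; this is generic in $\Theta$. Everything else is bookkeeping with the closed form of $L_\varepsilon^{-1}$.
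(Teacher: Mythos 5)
The paper gives no standalone proof of Theorem~\ref{cor:layernorm-basis}; it presents the conclusion as following ``by Lemma~\ref{lem:L_epsilon inversion lemma}'' without filling in the derivation. Your reconstruction is exactly the intended reading --- apply the lemma with $A = \Theta D$, obtain $\tilde v$, $\lambda_0$, $\tilde D$, $M_0$, and the closed form of $L_\varepsilon^{-1}$, then identify $y' = x + MLP'(x)$ with the fibre $L_\varepsilon^{-1}\{M_0 L_\varepsilon(y)\}$ modulo $\mathrm{span}(\mathbf 1)$ --- and you carry it out carefully.

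More importantly, the two caveats you flag are genuine gaps in the \emph{statement} of the theorem, not in your argument. First, the necessity direction really does need the image $\{L_\varepsilon(x+MLP(x)) : x \in \mathbb{R}^d\}$ to linearly span $H$; otherwise the deduction $(A^T(D')^{-1}\mathbf 1) \in H^\perp$ fails, and the theorem never assumes any non-degeneracy of $MLP$. Second, as you observe, the hypothesis only forces $D' = \mu\,\tilde D^{-1}$ for some admissible scalar $\mu$ (any $\mu \ge 1$ works when the image of $L_\varepsilon(x+MLP(x))$ fills $B$, and an even wider range if it does not), with a corresponding rescaling of $M_0$ inside the formula for $MLP'$. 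The paper's $\mu=1$ is the particular normalization chosen in the lemma's construction of $\lambda_0$ (so that $\|M_{0}|_H\|_2 = 1$), not a forced consequence, so the theorem's phrase ``must satisfy $D' = \tilde D^{-1}$'' is overstated and should be read as ``may be taken to be.'' You could tighten the write-up by stating explicitly that $D'$ is determined only up to this one-parameter family and that $MLP'$ is then the unique representative in each fibre modulo the $h(x)\mathbf 1$ freedom, which is precisely what the ``moreover'' clause of Lemma~\ref{lem:L_epsilon inversion lemma} supplies. With those two clarifications folded in, your proof is correct and complete.
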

Note that $h$ actually adds a single degree of freedom to the $MLP'$ and can be used to pass on information to the skip connection.

\begin{remark}
    The argument above can be distilled down to the case of RMSNorm \cite{zhang2019root}, which has become the normalization of choice in many modern LLMs \cite{Dubey2024TheL3,DA24,qwen2025}. Furthermore, a similar reasoning applies to the recently proposed the Dynamic Tanh alternative to normalization \cite{zhu2025transformersDyT}. Indeed, consider the map

\[
        DyT_{\gamma,\alpha,\beta} : \mathbb{R}^d \to \mathbb{R}^d, \quad 
        (x_1,\ldots,x_d) \mapsto \operatorname{diag}(\gamma)\big(\tanh(\alpha_1 x_1),\ldots,\tanh(\alpha_d x_d)\big) + \beta.
    \]

    This transformation is bijective onto its image whenever $\gamma_i \neq 0$ and $\alpha_i \neq 0$ for all $i$, a condition that is reasonable due to the stochasticity of the optimization. Hence, the structural result established above extends beyond LayerNorm to both RMSNorm and Dynamic Tanh.
\end{remark}

\begin{remark}
This corollary reveals that preserving a basis transformation through LayerNorm requires $MLP'$ to approximate a nonlinear function involving the composition of normalization, the original MLP, and denormalization. Since standard MLPs of the form $W_2\sigma(W_1 x)$ cannot exactly represent this composition, exact Query weight elimination with LayerNorm is not achievable using standard architectural components. This theoretical obstruction motivates either removing LayerNorm or accepting approximate equivalence with adjusted hyperparameters, as we pursue in our experiments. Notably, we test both a slightly larger hidden dimension in the MLP, as well as reduced weight decay compared to the original model.
\end{remark}

\subsection{MLP Basis Transfer Approximation}\label{app:mlp}

We continue with the column-vector convention (see Section~\ref{subsection:Layernorm_n_basischange}).

Standard transformers include skip connections around both attention and MLP blocks. With MLP skip connections, the basis transformation leads to the functional equation:
\begin{equation}
\Omega_1 \circ (\text{MLP} + \text{Id}) \circ \Omega_2^{-1} \approx \text{MLP}' + \text{Id}
\label{eq:mlp-composition}
\end{equation}
where $\Omega_i$ are the basis change matrices. We refer the reader to Theorem~\ref{prop:projector-characterization} for the case of exact solutions with ReLU activation. Here we test whether gradient descent can discover approximate solutions with GELU activations.

\textbf{Experimental setup.} We generate synthetic target functions that mirror the structure arising from basis transformations through skip-connected blocks:
\begin{equation}
Y_{\text{target}}(X) = W_2 \cdot \text{GELU}(W_1 \cdot X) + Z \cdot X - X
\end{equation}
where $W_1 \in \text{Mat}(4h, h)$, $W_2 \in \text{Mat}(h, 4h)$, and $Z \in  \text{Mat}(h,h)$ are randomly initialized and normalized by $1/\sqrt{4h}$, $1/\sqrt{4h}$, and $1/\sqrt{h}$ respectively.

We train a two-layer GELU MLP of the same shape, with a skip connection to approximate this target:
\begin{equation}
Y_{\text{model}}(X) = W_2' \cdot \text{GELU}(W_1' \cdot X) + X
\end{equation}
minimizing the L2 relative squared error objective. We train for 20,000 steps with batch size 65,536 using AdamW (learning rate $5 \times 10^{-3}$ with cosine decay, weight decay $10^{-1}$, gradient clipping at norm 3.0). We test across embedding dimensions $h \in \{256, 512, 768\}$ and compare against an optimal linear baseline matrix $A^*$ fitted via streaming ridge regression (Weight Decay $10^{-6}$) on 500,000 samples.

\textbf{Results.} Figure~\ref{fig:mlp-approx} shows the trained MLP achieves 4 to 6\% relative L2 error across all dimensions, substantially outperforming the optimal linear baseline (11 to 14\%). Mean cosine similarity reaches 0.999 ($\approx 2.5^\circ$ angular error). 

\begin{figure}[htbp]
\centering
\includegraphics[width=\textwidth]{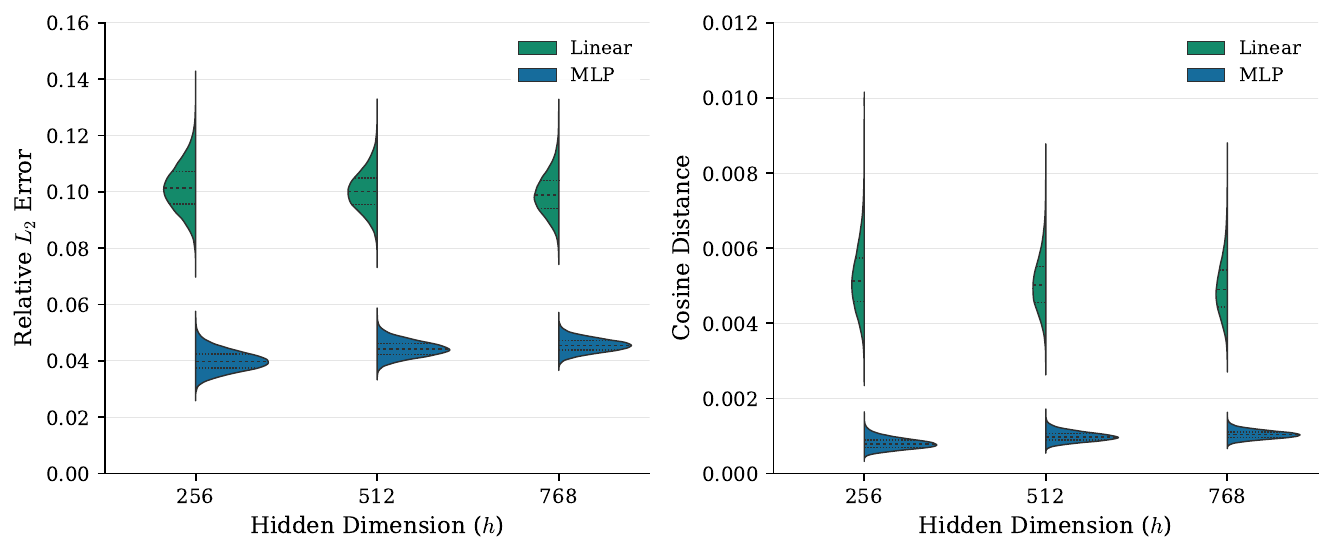}
\caption{\textbf{Left:} Relative $L_2$ error for approximating basis-transformed skip-connected MLPs as a function of hidden dimension $h \in \{256, 512, 768\}$. The trained GELU MLP achieves relative error between $4\%$ and $6\%$, significantly lower than the $11\%$ to $14\%$ error of the optimal linear baseline. \textbf{Right:} Mean cosine distance between predicted and target outputs. The MLP maintains a distance of approximately $0.002$ across all dimensions, while the linear baseline remains above $0.009$, indicating superior directional alignment.}
\label{fig:mlp-approx}
\end{figure}

\textbf{Implications for Query weight elimination.} These results demonstrate that:

\begin{enumerate}
\item \textbf{Approximate solutions are learnable}: While our theory characterizes exact solutions only, standard gradient descent reliably discovers high-quality approximate solutions to the composition problem with GELU activations.

\item \textbf{Nonlinearity is essential}: The trained nonlinear MLP outperforms the optimal linear baseline by a factor of $2\times$ in relative error, confirming that the composition of basis changes with skip connections cannot be adequately approximated by linear transformations alone.

\item \textbf{Dimension-independent quality}: Consistent performance across $h \in \{256, 512, 768\}$ suggests the approximation quality does not degrade with dimension, supporting viability at larger scales.
\end{enumerate}

These findings bridge the gap between our theoretical guarantees (exact solutions, ReLU, restricted skip connections) and practical architectures (approximate solutions, GELU, full skip connections). The key insight is that MLPs can implicitly learn the necessary basis adaptations through gradient-based optimization, even when those adaptations do not admit closed-form solutions.

\subsection{When Can Skip Connections Be Absorbed into ReLU MLPs?}\label{subsec:relu-skip}

We continue with the column-vector convention (see Section~\ref{subsection:Layernorm_n_basischange}).

The theorem below provides a precise characterization of when a skip connection appended to a single-hidden-layer ReLU MLP can be exactly absorbed into an equivalent residual-free ReLU mapping. This question connects two complementary lines of inquiry: the algebraic and reparametrization perspective on identity and residual blocks \cite{He2016}, and geometric analyses of ReLU networks that relate equality of piecewise-linear maps to alignment of their kink hyperplanes and neuron parameters \cite{Arora2018,Montufar2014}.

\begin{theorem}[Residual-free representability of skip-ReLU-MLPs]\label{prop:projector-characterization}
Let \(h\ge 2\) and set \(m=4h\). Let
\[
W_1\in \text{Mat}(m,h),\qquad W_2\in \text{Mat}(h,m)
\]
be given matrices with \(\rank W_1=\rank W_2=h\), where $W_1$ consists of pairwise non-collinear, non-zero rows, and $W_2$ consists of non-zero columns. Denote \(\ReLU\) coordinatewise. There exist matrices
\[
V_1\in \text{Mat}(m,h),\qquad V_2\in \text{Mat}(h,m)
\]
such that the identity
\[
W_2\,\ReLU(W_1 x) + x \;=\; V_2\,\ReLU(V_1 x)
\qquad\text{holds for all }x\in\mathbb{R}^h
\]
if and only if there exists an index set \(J\subset\{1,\dots,m\}\) with \(|J|\ge h\) for which
\[
W_2[:,J]\,W_1[J,:]=-I_h.
\]
\end{theorem}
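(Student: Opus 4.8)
The plan is to prove the two implications separately: the ``if'' direction is a short explicit construction, while the ``only if'' direction rests on the rigidity of continuous piecewise-linear (CPWL) representations by shallow $\ReLU$ networks, using the elementary identity $\ReLU(u)=\tfrac12(u+|u|)$ to split everything into an even part governed by absolute-value ridges and an odd (linear) part.

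For sufficiency, suppose $W_2[:,J]\,W_1[J,:]=-I_h$ for some $J\subset\{1,\dots,m\}$. I would take $V_2:=W_2$ and let $V_1$ be $W_1$ with the rows indexed by $J$ negated, i.e.\ $V_1=\diag(\delta)\,W_1$ with $\delta_j=-1$ for $j\in J$ and $\delta_j=1$ otherwise. Using $\ReLU(-t)=\ReLU(t)-t$ coordinatewise, each negated row contributes $W_2[:,j]\ReLU(W_1[j,:]\,x)-W_2[:,j]\,W_1[j,:]\,x$, so summing over all $j$ gives $V_2\ReLU(V_1 x)=W_2\ReLU(W_1 x)-\bigl(W_2[:,J]W_1[J,:]\bigr)x=W_2\ReLU(W_1 x)+x$, as required. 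Note $V_1,V_2$ need not be full rank, which is fine since the statement only asks for matrices of the given shape; and the side condition $|J|\ge h$ is automatic, since a product of an $h\times|J|$ by a $|J|\times h$ matrix can equal $-I_h$ only when $|J|\ge h$.

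For necessity, suppose $V_2\ReLU(V_1 x)=W_2\ReLU(W_1 x)+x$ for all $x$. Evaluating at $x$ and $-x$ and taking the sum and difference (equivalently, separating even and odd parts) yields two identities: the odd part gives the linear equation $V_2V_1=W_2W_1+2I_h$, and the even part gives $V_2\lvert V_1 x\rvert=W_2\lvert W_1 x\rvert$ for all $x$. The even-part identity is where CPWL rigidity enters: $x\mapsto A\lvert Bx\rvert$ is smooth away from the hyperplanes $\{B[j,:]x=0\}$, and across the hyperplane with normal $B[j,:]$ its Jacobian jumps by a matrix proportional to the rank-one matrix $A[:,j]\,B[j,:]$; comparing these jumps at generic points of the hyperplanes forces the two sides to have the same ridge data (hyperplane directions together with their coefficient vectors). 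In the non-degenerate case (rows of $W_1$ nonzero and pairwise non-parallel, columns of $W_2$ nonzero) the right side has exactly $m$ ridges, so $V_1$ must have $m$ rows in exactly these directions with nonzero coefficients; matching directions and coefficients then forces, after pairing up indices, $V_1[\pi(j),:]=\lambda_j W_1[j,:]$ and $V_2[:,\pi(j)]=W_2[:,j]/|\lambda_j|$ with $\lambda_j\neq 0$. Substituting into $V_2V_1=W_2W_1+2I_h$ gives $\sum_j(\operatorname{sign}\lambda_j-1)\,W_2[:,j]\,W_1[j,:]=2I_h$, and since $\operatorname{sign}\lambda_j-1\in\{0,-2\}$, the set $J:=\{j:\lambda_j<0\}$ satisfies $W_2[:,J]W_1[J,:]=-I_h$ (with $J\neq\varnothing$ as $h\ge1$), completing the argument.

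The hard part is making the even-part rigidity step rigorous without genericity assumptions. The hypotheses only give $\operatorname{rank}W_1=\operatorname{rank}W_2=h$, so $W_1$ may have repeated, scalar-multiple, or zero rows and $W_2$ may have zero columns; then parallel neurons merge, dead neurons drop out, $V_1$ need not be a rescaling of $W_1$, and the clean form $V_1=\Lambda W_1$, $V_2=W_2\lvert\Lambda\rvert^{-1}$ can fail. The plan here is to first pass each network to an irreducible normal form (remove dead neurons, merge parallel ones), prove uniqueness of that normal form by the classical $\ReLU$-identifiability argument sketched above, and then reconstruct a valid $J$ from the irreducible data — this is precisely where $\operatorname{rank}W_1=\operatorname{rank}W_2=h$ and the count $m=4h$ must be used to guarantee enough essential structure survives. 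A secondary, bookkeeping-level subtlety is that the even part is blind to sign, since $\lvert{-u}^{\!\top}x\rvert=\lvert u^{\!\top}x\rvert$, so the signs $\operatorname{sign}\lambda_j$ — hence the precise set $J$ — are recoverable only from the odd-part identity $V_2V_1=W_2W_1+2I_h$, which must be kept in play throughout.
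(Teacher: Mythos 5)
Your proposal follows the paper's proof essentially step for step: the same even/odd split via $\ReLU(y)=\tfrac12(y+|y|)$, the same sufficiency construction (negate the rows of $W_1$ indexed by $J$, keep $V_2=W_2$), and the same necessity route via kink-hyperplane matching followed by substitution into the linear identity $V_2V_1=W_2W_1+2I_h$. The paper reaches $V_1=DPW_1$ with $D$ diagonal-nonvanishing and $P$ a permutation by asserting that the kink loci of the two networks coincide; you reach the equivalent $V_1[\pi(j),:]=\lambda_jW_1[j,:]$, $V_2[:,\pi(j)]=W_2[:,j]/|\lambda_j|$ the same way, and your final substitution producing $J=\{j:\lambda_j<0\}$ is the same computation as the paper's $W_2(D'-I)W_1=2I_h$.

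The concern you flag in the necessity direction is well-founded, and it is a genuine flaw shared with the paper's own proof, which asserts the scaled-permutation structure without addressing zero rows of $W_1$, zero columns of $W_2$, parallel rows, or cancelling neurons. Moreover the issue is not a bookkeeping gap that an irreducible-normal-form argument can repair: the stated rank hypotheses do not rule out counterexamples to the theorem itself. Take $W_1=\begin{pmatrix}I_h\\0_{3h\times h}\end{pmatrix}$ and $W_2=\begin{pmatrix}I_h&0_{h\times 3h}\end{pmatrix}$, both of rank $h$. Then $W_2\ReLU(W_1x)+x=\ReLU(x)+x=2\ReLU(x)-\ReLU(-x)$ coordinatewise, so $V_1=\begin{pmatrix}I_h\\-I_h\\0_{2h\times h}\end{pmatrix}$, $V_2=\begin{pmatrix}2I_h&-I_h&0_{h\times 2h}\end{pmatrix}$ realizes the identity. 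Yet for every $J$ the product $W_2[:,J]\,W_1[J,:]$ is a $0/1$ diagonal matrix and can never equal $-I_h$. So the necessity direction requires additional non-degeneracy hypotheses on $(W_1,W_2)$ (e.g.\ nonzero, pairwise non-parallel rows of $W_1$ and nonzero columns of $W_2$), precisely the genericity you identify; your proposed repair of passing to an irreducible form cannot reconstruct a valid $J$ in this example, so the correct fix is to strengthen the hypotheses of the theorem rather than the argument.
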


\begin{proof}
We prove both directions.

\emph{(Sufficiency).} Suppose \(J\subset\{1,\dots,m\}\) satisfies \(|J|\ge h\) and
\[
W_2[:,J]\,W_1[J,:]=-I_h.
\]
Let \(\Pi=\diag(\mathbf{1}_J)\in\{0,1\}^{m\times m}\) be the coordinate projector onto \(J\), and set the sign diagonal
\(D:=I-2\Pi\in\{\pm1\}^{m\times m}\). Define
\[
V_1:=D W_1,\qquad V_2:=W_2.
\]
For any \(y\in\mathbb{R}^m\) write \(\ReLU(y)=\tfrac12(y+|y|)\). Since \(D\) has diagonal entries \(\pm1\) we have \(|V_1x|=|W_1x|\) for all \(x\). Hence
\begin{align*}
V_2\ReLU(V_1x)
&=\tfrac12 V_2\big(V_1x+|V_1x|\big)
=\tfrac12\big(V_2V_1x + V_2|W_1x|\big),\\
W_2\ReLU(W_1x)+x
&=\tfrac12\big(W_2W_1x + W_2|W_1x| + 2x\big).
\end{align*}
Using \(D=I-2\Pi\) and \(W_2\Pi W_1=-I_h\) we obtain
\[
V_2V_1=W_2 D W_1 = W_2(I-2\Pi)W_1 = W_2W_1 - 2(W_2 \Pi W_1) = W_2W_1 + 2I_h.
\]
Substituting into the displayed expressions yields \(V_2\ReLU(V_1x)=W_2\ReLU(W_1x)+x\) for all \(x\). Thus the condition on \(J\) is sufficient.

\medskip

\emph{(Necessity).} Suppose the identity $W_2\,\ReLU(W_1 x) + x = V_2\,\ReLU(V_1 x)$ holds for all $x \in \mathbb{R}^h$.
Using $\ReLU(y) = \tfrac{1}{2}(y + |y|)$, the identity becomes:
\[
W_2\,\tfrac{1}{2}(W_1 x + |W_1 x|) + x = V_2\,\tfrac{1}{2}(V_1 x + |V_1 x|)
\]
Multiplying by 2 and separating the odd and even terms in $x$ yields:
\[
\big(W_2 W_1 + 2I_h - V_2 V_1\big) x = V_2 |V_1 x| - W_2 |W_1 x|
\]
The left-hand side is an odd function of $x$, while the right-hand side is an even function. This implies both sides must be identically zero, giving two conditions:
\begin{align}
 V_2 V_1 &= W_2 W_1 + 2I_h \tag{N1} \\
 V_2 |V_1 x| &= W_2 |W_1 x| \quad \text{for all } x \in \mathbb{R}^h \tag{N2}
\end{align}
We now analyze condition (N2) to relate the rows of $V_1$ and $W_1$.
The function $f(x) = W_2|W_1 x|$ is non-differentiable exactly on the set $\mathcal{H}_W = \bigcup_{i: (W_2)_{:i} \neq 0} \{ x \mid (W_1)_i x = 0 \}$. Since $W_2$ consists of non-zero columns, every hyperplane corresponding to a row of $W_1$ is present in the singular set. Since $W_1$ consists of pairwise non-collinear rows, these hyperplanes are distinct.
For the identity (N2) to hold, the set of non-differentiable points must be identical: $\mathcal{H}_W = \mathcal{H}_V$. This implies that for every row $w$ of $W_1$, there exists a row $v$ of $V_1$ such that they define the same hyperplane (i.e., they are collinear).
Thus, $V_1$ must be a scaled permutation of $W_1$. We can write $V_1 = D P W_1$ for some diagonal matrix $D = \diag(d_1, \dots, d_m)$ with $d_i \ne 0$ and some permutation matrix $P$.

Substitute $V_1 = D P W_1$ into (N2):
\[
V_2 |D P W_1 x| = W_2 |W_1 x|
\]
Let $D_a = \diag(|d_1|, \dots, |d_m|)$. Since a permutation only reorders elements, $|P y| = P|y|$. The equation becomes:
\[
V_2 D_a P |W_1 x| = W_2 |W_1 x|
\]
Since $h \ge 2$ and the rows of $W_1$ are pairwise non-collinear, the functions $\{|w_i^t x|\}_{i=1}^m$ are linearly independent modulo linear functions. This forces the coefficient matrices to be equal: $V_2 D_a P = W_2$. Solving for $V_2$ gives $V_2 = W_2 P^{-1} D_a^{-1} = W_2 P^t D_a^{-1}$.

Now substitute both $V_1 = D P W_1$ and $V_2 = W_2 P^t D_a^{-1}$ into (N1):
\[
(W_2 P^t D_a^{-1}) (D P W_1) = W_2 W_1 + 2I_h
\]
\[
W_2 P^t (D_a^{-1} D) P W_1 = W_2 W_1 + 2I_h
\]
Let $S = D_a^{-1} D$. This is a diagonal matrix with entries $S_{jj} = d_j/|d_j| = \operatorname{sign}(d_j) \in \{\pm 1\}$. Let $D' = P^t S P$. This similarity transformation simply permutes the diagonal entries of $S$, so $D'$ is also a diagonal matrix with entries $\pm 1$. The equation simplifies to:
\[
W_2 D' W_1 = W_2 W_1 + 2I_h
\]
\[
W_2 (D' - I) W_1 = 2I_h
\]
Let $J \subset \{1, \dots, m\}$ be the index set where the diagonal entries of $D'$ are $-1$. Then $D' - I = -2\Pi$, where $\Pi = \diag(\mathbf{1}_J)$ is the coordinate projector onto $J$. Substituting this gives:
\[
W_2 (-2 \Pi) W_1 = 2I_h \implies W_2 \Pi W_1 = -I_h
\]
The matrix $W_2 \Pi W_1$ is precisely the product $W_2[:,J]\,W_1[J,:]$. Thus, we have $W_2[:,J]\,W_1[J,:] = -I_h$.

Finally, by Sylvester's rank inequality:
\[
h = \rank(W_2[:,J]\,W_1[J,:]) \le \min(\rank(W_2[:,J]), \rank(W_1[J,:]))
\]
Since $\rank(W_1[J,:]) \le |J|$, we must have $|J| \ge h$. This completes the proof of necessity.
\end{proof}

\begin{corollary}[Uniqueness under invertible residual perturbations]
\label{cor:uniqueness-invertible-perturbation}
Let \(h \in \mathbb{N}\) with \(h \geq 2\), and set \(m = 4h\). Denote by
\(\sigma: \mathbb{R}^m \to \mathbb{R}^m\) the coordinatewise ReLU function.

For matrices \(W_1 \in  \text{Mat}(m,h)\), \(W_2 \in  \text{Mat}(h,m)\),
and \(Z \in  \text{Mat}(h,h)\), define the map
\[
\phi_{W_1, W_2, Z}: \mathbb{R}^h \to \mathbb{R}^h, \quad
x \mapsto W_2\,\sigma(W_1 x) + Z x.
\]
Consider the parameter space \(\mathcal{W} :=  \text{Mat}(m,h) \times  \text{Mat}(h,m) \times  \text{Mat}(h,h)\).
For Lebesgue-almost every \((W_1, W_2, Z) \in \mathcal{W}\), the matrices satisfy the conditions of Theorem~\ref{prop:projector-characterization} (pairwise non-collinear non-zero rows for $W_1$, non-zero columns for $W_2$), and the following holds:

If \(Z' \in  \text{Mat}(h,h)\) satisfies \(Z' \neq Z\) and \(Z - Z'\) is invertible,
then for every \((W_1', W_2') \in  \text{Mat}(m,h) \times  \text{Mat}(h,m)\),
\[
\phi_{W_1, W_2, Z} \neq \phi_{W_1', W_2', Z'}.
\]
\end{corollary}

\begin{remark}
Theorem~\ref{prop:projector-characterization} and Corollary~\ref{cor:uniqueness-invertible-perturbation} hold without modification for any $m \geq h$.
\end{remark}

\begin{proof}
Suppose for contradiction that \(\phi_{W_1, W_2, Z} = \phi_{W_1', W_2', Z'}\). Then for all \(x \in \mathbb{R}^h\):
\[
W_2 \sigma(W_1 x) + Z x = W_2' \sigma(W_1' x) + Z' x.
\]
Rearranging:
\[
W_2 \sigma(W_1 x) + (Z - Z') x = W_2' \sigma(W_1' x).
\]
Since \(Z - Z'\) is invertible, substituting \(y = (Z - Z')x\) yields:
\[
W_2 \sigma(W_1 (Z - Z')^{-1} y) + y = W_2' \sigma(W_1' (Z - Z')^{-1} y).
\]
Setting \(A = W_2\) and \(B = W_1 (Z - Z')^{-1}\), the left-hand side has the form \(A \sigma(B y) + y\). By the necessity direction of Theorem~\ref{prop:projector-characterization}, there must exist \(J\) with \(|J| \ge h\) such that \(A[:, J] B[J, :] = -I_h\). Substituting back and right-multiplying by \(Z - Z'\):
\[
W_2[:, J] W_1[J, :] = Z' - Z.
\]
For any fixed \(J\), this imposes \(h^2\) algebraic constraints on \((W_1, W_2, Z)\). Under continuous distributions, this set has Lebesgue measure zero. Since there are finitely many choices of \(J\), the union over all \(J\) also has measure zero, so for almost every \((W_1, W_2, Z)\) no such \(J\) exists.
\end{proof}

The condition $W_2[:,J] W_1[J,:] = -I_h$ in Theorem~\ref{prop:projector-characterization} is \emph{non-generic}: it fails with probability 1 under continuous weight distributions. This reveals that ReLU MLPs and Skip-ReLU-MLPs of the same width represent \emph{generically disjoint} function classes. The algebraic condition characterizes exactly when these classes intersect. This clarifies the role of skip connections: they do not provide strictly ``more'' expressivity, but rather access to a \emph{different} region of function space.

\subsection{Proofs from Sections 3 and 4}
\label{app:proofs}

\begin{proof}[Proof of Theorem~\ref{thm:free-lunch} (Single-Layer Query Weight Elimination)]
Set $\Theta = W_Q^j$. For block $i$ with skip connections around both attention and MLP:
\[
B_i(X) = \text{MLP}_i(X + \text{MHA}_i(X)) + X + \text{MHA}_i(X)
\]

\textbf{Inductive hypothesis $\mathcal{H}(i)$:} The output of layer $i$ in the modified model is $\widetilde{X}_i = X_i\Theta$, where $X_i$ is the original output.

\textbf{Base case $\mathcal{H}(0)$:} Set $\widetilde{E} = E\Theta$ and $\widetilde{E}_P = E_P\Theta$. Then $\widetilde{X}_0 = X_0\Theta$, so $\mathcal{H}(0)$ holds.

\textbf{Inductive step $\mathcal{H}(i-1) \Rightarrow \mathcal{H}(i)$:} Assume $\widetilde{X}_{i-1} = X_{i-1}\Theta$. Define reduced weights:
\begin{align*}
\widetilde{W}_Q^i &= \Theta^{-1}W_Q^i, \quad \widetilde{W}_K^i = \Theta^{-1}W_K^i, \quad \widetilde{W}_V^i = \Theta^{-1}W_V^i, \quad \widetilde{W}_O^i = W_O^i\Theta, \\
\widetilde{W}_{up}^i &= \Theta^{-1}W_{up}^i, \quad \widetilde{W}_{down}^i = W_{down}^i\Theta.
\end{align*}

For attention: The projections with input $X_{i-1}\Theta$ are:
\begin{align*}
\text{Query:} \quad (X_{i-1}\Theta)(\Theta^{-1}W_Q^i) &= X_{i-1} W_Q^i, \\
\text{Key:} \quad (X_{i-1}\Theta)(\Theta^{-1}W_K^i) &= X_{i-1} W_K^i, \\
\text{Value:} \quad (X_{i-1}\Theta)(\Theta^{-1}W_V^i) &= X_{i-1} W_V^i.
\end{align*}
Thus $\mathcal{S}_c(X_{i-1}\Theta, \widetilde{W}_Q^i, \widetilde{W}_K^i, \widetilde{W}_V^i) = \mathcal{S}_c(X_{i-1}, W_Q^i, W_K^i, W_V^i)$, and the attention output is:
\begin{align*}
\widetilde{\text{MHA}}_i(X_{i-1}\Theta) &= \mathcal{S}_c(X_{i-1}\Theta, \widetilde{W}_Q^i, \widetilde{W}_K^i, \widetilde{W}_V^i) \cdot \widetilde{W}_O^i \\
&= \mathcal{S}_c(X_{i-1}, W_Q^i, W_K^i, W_V^i) \cdot W_O^i\Theta \\
&= \text{MHA}_i(X_{i-1}) \cdot \Theta.
\end{align*}

After the attention skip connection:
\[
\widetilde{Y}_i = X_{i-1}\Theta + \widetilde{\text{MHA}}_i(X_{i-1}\Theta) = (X_{i-1} + \text{MHA}_i(X_{i-1}))\Theta = Y_i\Theta.
\]

For MLP with skip: Let $Z_i = X_{i-1} + \text{MHA}_i(X_{i-1}) = Y_i$. Then:
\begin{align*}
\widetilde{\text{MLP}}_i(Z_i\Theta) + Z_i\Theta &= \varphi(Z_i\Theta \cdot \Theta^{-1}W_{up}^i)W_{down}^i\Theta + Z_i\Theta \\
&= \varphi(Z_iW_{up}^i)W_{down}^i\Theta + Z_i\Theta = (\text{MLP}_i(Z_i) + Z_i)\Theta.
\end{align*}

After the MLP skip connection:
\[
\widetilde{X}_i = \widetilde{Y}_i + \widetilde{\text{MLP}}_i(\widetilde{Y}_i) = Y_i\Theta + \text{MLP}_i(Y_i)\Theta = X_i\Theta.
\]
Thus $\mathcal{H}(i)$ holds.

\textbf{Layer $j$ achieves identity Query.} At layer $j$, we have $\widetilde{W}_Q^j = \Theta^{-1}W_Q^j = (W_Q^j)^{-1}W_Q^j = I_d$.

\textbf{LM head.} Set $\widetilde{W}_{LM} = \Theta^{-1}W_{LM}$. By $\mathcal{H}(L)$, the final output is $X_L\Theta \cdot \Theta^{-1}W_{LM} = X_L W_{LM}$.
\end{proof}

\begin{proof}[Proof of Theorem \ref{thm:AttentionSkipOnlyQWElim}]
We first establish the key structural result that enables the telescoping argument.

\begin{lemmapush}
\begin{lemma}[Attention-Skip-Only Block Intertwining]\label{lem:intertwining}
Let $B_i:  \text{Mat}(l,d) \to  \text{Mat}(l,d)$ be the $i$-th transformer block with weights $(W_Q^i, W_K^i, W_V^i, W_O^i, W_{up}^i, W_{down}^i)$, skip connection only around attention, and no skip around MLP:
\[
B_i(X) = \varphi\Big((X + \mathcal{S}_c(X, W_Q^i, W_K^i, W_V^i)W_O^i) \cdot W_{up}^i\Big) W_{down}^i
\]
where $\varphi$ is an element-wise activation and $W_Q^i \in GL(d)$.

Then for $\Theta_i = W_Q^i$ and any $\Theta_{i+1} \in GL(d)$, there exists a reduced block $\widetilde{B}_i$ with weights $(I_d, \widetilde{W}_K^i, \widetilde{W}_V^i, \widetilde{W}_O^i, \widetilde{W}_{up}^i, \widetilde{W}_{down}^i)$ such that the following \textbf{intertwining relation} holds for all $X \in  \text{Mat}(l,d)$:
\[
B_i(X) = \widetilde{B}_i(X \cdot \Theta_i) \cdot \Theta_{i+1}^{-1}
\]
This relation is captured by the commutative diagram:
\[
\begin{array}{ccc}
X & \xrightarrow{B_i} & B_i(X) \\
\downarrow\scriptstyle{\cdot\, \Theta_i} & & \downarrow\scriptstyle{\cdot\, \Theta_{i+1}} \\
X \cdot \Theta_i & \xrightarrow{\widetilde{B}_i} & \widetilde{B}_i(X \cdot \Theta_i)
\end{array}
\]
\end{lemma}

\begin{proof}
Define the reduced weights:
\begin{align*}
\widetilde{W}_Q^i &= I_d, &\quad \widetilde{W}_O^i &= W_O^i \Theta_i, \\
\widetilde{W}_K^i &= \Theta_i^{-1}W_K^i, &\quad \widetilde{W}_{up}^i &= \Theta_i^{-1}W_{up}^i, \\
\widetilde{W}_V^i &= \Theta_i^{-1}W_V^i, &\quad \widetilde{W}_{down}^i &= W_{down}^i\Theta_{i+1}.
\end{align*}

We verify the intertwining relation. The attention output with input $X \cdot \Theta_i$ is:
\begin{align*}
\widetilde{\text{MHA}}_i(X\Theta_i) &= \mathcal{S}_c(X\Theta_i, I_d, \Theta_i^{-1}W_K^i, \Theta_i^{-1}W_V^i) \cdot W_O^i\Theta_i \\
&= \mathcal{S}_c(X, W_Q^i, W_K^i, W_V^i) \cdot W_O^i\Theta_i \quad \text{(by the Reparametrization Lemma)}\\
&= \text{MHA}_i(X) \cdot \Theta_i.
\end{align*}

The attention-plus-skip output is:
\[
X\Theta_i + \widetilde{\text{MHA}}_i(X\Theta_i) = X\Theta_i + \text{MHA}_i(X)\Theta_i = (X + \text{MHA}_i(X))\Theta_i = Z_i \cdot \Theta_i,
\]
where $Z_i = X + \text{MHA}_i(X)$ denotes the MLP input in the original block.

The reduced MLP output is:
\begin{align*}
\widetilde{\text{MLP}}_i(Z_i\Theta_i) &= \varphi(Z_i\Theta_i \cdot \Theta_i^{-1}W_{up}^i) \cdot W_{down}^i\Theta_{i+1} \\
&= \varphi(Z_i W_{up}^i) \cdot W_{down}^i\Theta_{i+1} \\
&= \text{MLP}_i(Z_i) \cdot \Theta_{i+1} \\
&= B_i(X) \cdot \Theta_{i+1}.
\end{align*}
Therefore $\widetilde{B}_i(X \cdot \Theta_i) = B_i(X) \cdot \Theta_{i+1}$, which gives $B_i(X) = \widetilde{B}_i(X \cdot \Theta_i) \cdot \Theta_{i+1}^{-1}$.
\end{proof}
\end{lemmapush}

We now proceed with the main proof. The complete model consists of an embedding layer, $L$ transformer blocks, and a final LM Head. For a token sequence $x = (x_0, \ldots, x_k)$, let $E[x]$ denote the embedding matrix and $E_P[0:k]$ the positional embeddings. The input to the first block is $X_0 = E[x] + E_P[0:k]$, and we write $X_i = B_i(X_{i-1})$ for the output of block $i$.

Define the basis change matrices: $\Theta_i = W_Q^i$ for $1 \le i \le L$, and
\begin{align*}
\Theta_{L+1} = \begin{cases}
(\Theta_1^t)^{-1} & \text{if the LM Head and embedding weights are tied} \\
I_d & \text{otherwise}
\end{cases}
\end{align*}

\textbf{Inductive hypothesis $\mathcal{H}(i)$:} The output of the reduced model after block $i$ equals $X_i \cdot \Theta_{i+1}$, where $X_i$ is the output of the original model after block $i$.

\textbf{Base case $\mathcal{H}(0)$:} Define $\widetilde{E} = E\Theta_1$ and $\widetilde{E}_P = E_P\Theta_1$. The input to the first block in the reduced model is:
\[
\widetilde{X}_0 = \widetilde{E}[x] + \widetilde{E}_P[0:k] = (E[x] + E_P[0:k])\Theta_1 = X_0 \cdot \Theta_1.
\]
This equals $X_0 \cdot \Theta_1$, so $\mathcal{H}(0)$ holds.

\textbf{Inductive step $\mathcal{H}(i-1) \Rightarrow \mathcal{H}(i)$:} Assume $\mathcal{H}(i-1)$ holds, i.e., the input to block $i$ in the reduced model is $X_{i-1} \cdot \Theta_i$. By the Intertwining Lemma:
\[
\widetilde{B}_i(X_{i-1} \cdot \Theta_i) = B_i(X_{i-1}) \cdot \Theta_{i+1} = X_i \cdot \Theta_{i+1}.
\]
Thus $\mathcal{H}(i)$ holds.

By induction, $\mathcal{H}(L)$ holds: the output after block $L$ in the reduced model is $X_L \cdot \Theta_{L+1}$.

\textbf{LM Head:} In the untied case, $\Theta_{L+1} = I_d$, so the output is $X_L$. Setting $\widetilde{W}_{LM} = W_{LM}$ gives identical logits.

In the tied case, $\widetilde{W}_{LM} = \widetilde{E}^t = \Theta_1^t E^t$, so:
\[
X_L \cdot \Theta_{L+1} \cdot \widetilde{W}_{LM} = X_L \cdot (\Theta_1^t)^{-1} \cdot \Theta_1^t E^t = X_L \cdot E^t = X_L \cdot W_{LM},
\]
yielding identical final logits.
\end{proof}

\begin{proof}[Proof of Theorem \ref{cor:weight-sharing}]

In order to prove the theorem we will rely on the following lemma:
\begin{lemmapush}
\begin{lemma}[Blocks are conjugate to reduced blocks]\label{lem:conjugate blocks}
Let $B:  \text{Mat}(l,d) \to  \text{Mat}(l,d)$ be a transformer block without layer normalization given by 
\[
B(X) = (\text{Id} + \text{MLP}) \circ (\text{Id} + \text{MHA})(X)
\]
where $\text{MHA}$ uses weights $(W_Q, W_K, W_V, W_O)$ with $W_Q \in GL(d)$, and $\text{MLP}(Y) = \varphi(YW_{up})W_{down}$ for an element-wise activation $\varphi$.

Then there exists $\Theta \in GL(d)$ and a reduced block $\widetilde{B}$ using weights $(I_d, \widetilde{W}_K, \widetilde{W}_V, \widetilde{W}_O, \widetilde{W}_{up}, \widetilde{W}_{down})$ such that 
\[
B = \Theta^{-1} \circ \widetilde{B} \circ \Theta.
\]

In other words, for almost every choice of weights (in the Lebesgue sense), a transformer block without normalization is conjugate to a reduced block.
\end{lemma}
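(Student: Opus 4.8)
The plan is to write down the conjugating matrix explicitly and verify the intertwining identity one residual branch at a time, using the Reparametrization Lemma (Lemma~\ref{lem:repearmetrization}) for the attention branch and elementary bookkeeping for the residual MLP branch. Throughout I use the paper's convention that weights act on the right, so composition with $\Theta$ means $X \mapsto X\Theta$, and $\Theta^{-1}\circ\widetilde{\text{Block}}\circ\Theta$ sends $X$ to $\widetilde{\text{Block}}(X\Theta)\,\Theta^{-1}$.

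First I would set $\Theta := W_Q$, which is invertible by hypothesis — a condition holding for Lebesgue-almost-every weight choice, which is what the closing remark refers to — and define the reduced weights $\widetilde{W}_K := \Theta^{-1}W_K$, $\widetilde{W}_V := \Theta^{-1}W_V$, $\widetilde{W}_O := W_O\Theta$, $\widetilde{W}_{up} := \Theta^{-1}W_{up}$, $\widetilde{W}_{down} := W_{down}\Theta$, letting $\widetilde{\text{Block}} = (\text{Id}+\widetilde{\text{MLP}})\circ(\text{Id}+\widetilde{\text{Att}})$ be the block assembled from $(\text{Id},\widetilde{W}_K,\widetilde{W}_V,\widetilde{W}_O,\widetilde{W}_{up},\widetilde{W}_{down})$. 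This mirrors the per-layer substitution already used in the proof of Theorem~\ref{thm:AttentionSkipOnlyQWElim}, except that here the output is re-expressed in the \emph{same} basis $\Theta$ rather than handed to a next-layer basis.

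Next I would check the attention intertwining $\widetilde{\text{Att}}(X\Theta) = \text{Att}(X)\Theta$. Expanding, $\widetilde{\text{Att}}(X\Theta) = \mathcal{S}_c(X\Theta,\text{Id},\Theta^{-1}W_K,\Theta^{-1}W_V)\,W_O\Theta$; applying Lemma~\ref{lem:repearmetrization} to $\mathcal{S}_c$ with query matrix $\Theta = W_Q$ collapses the first three arguments back to $\mathcal{S}_c(X,W_Q,W_K,W_V)$, so the whole expression equals $\text{Att}(X)\Theta$. Since the skip connection is linear, $(\text{Id}+\widetilde{\text{Att}})(X\Theta) = X\Theta + \text{Att}(X)\Theta = \big((\text{Id}+\text{Att})(X)\big)\Theta$. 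For the residual MLP branch, writing $Z := (\text{Id}+\text{Att})(X)$ and using that $\varphi$ is entrywise, $\varphi\big((Z\Theta)\widetilde{W}_{up}\big) = \varphi(Z\Theta\Theta^{-1}W_{up}) = \varphi(ZW_{up})$, hence $\widetilde{\text{MLP}}(Z\Theta) = \varphi(ZW_{up})\,W_{down}\Theta = \text{MLP}(Z)\Theta$ and, again by linearity of the skip, $(\text{Id}+\widetilde{\text{MLP}})(Z\Theta) = \big((\text{Id}+\text{MLP})(Z)\big)\Theta$. Composing the two branches gives $\widetilde{\text{Block}}(X\Theta) = \text{Block}(X)\Theta$ for every $X$, i.e. $\text{Block} = \Theta^{-1}\circ\widetilde{\text{Block}}\circ\Theta$; the "almost every" clause follows because $\{W_Q : \det W_Q = 0\}$ is a Lebesgue-null set.

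I do not expect a genuine obstacle here: the lemma is essentially a change-of-variables identity. The one conceptual point worth stating — and the reason the argument tolerates a skip connection around the MLP, unlike in Theorem~\ref{thm:AttentionSkipOnlyQWElim} — is that conjugation introduces $\Theta$ on the input and removes it on the output symmetrically, so both residual pathways survive: the skip is linear (so $\Theta$ passes through it untouched) and the activation is entrywise (so $\Theta$ can be absorbed into $W_{up}$ from the left). The only things to be careful about are the right-multiplication conventions and confirming that $\widetilde{W}_K,\dots,\widetilde{W}_{down}$ have the correct shapes, which they manifestly do.
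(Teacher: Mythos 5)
Your proof is essentially the paper's proof: you pick exactly the same $\Theta=W_Q$ and the same six reduced weights, and the only difference is that you spell out the verification (via the Reparametrization Lemma for the attention branch and entrywise-activation bookkeeping for the MLP branch) that the paper leaves implicit. Correct, same approach.
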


\begin{proof}
Set $\Theta = W_Q$, $\widetilde{W}_K = \Theta^{-1}W_K$, $\widetilde{W}_V = \Theta^{-1}W_V$, $\widetilde{W}_O = W_O\Theta$, $\widetilde{W}_{up} = \Theta^{-1}W_{up}$, and $\widetilde{W}_{down} = W_{down}\Theta$.
\end{proof}
\end{lemmapush}
Let us now continue with the proof of the result by considering the new variables as in the proof of the lemma, and let furthermore

\[
\widetilde{E} = E\Theta,\quad \widetilde{E}_P = E_P\Theta,\quad 
\widetilde{W}_{LM} = \Theta^{-1}W_{LM}.
\]

With these parameters, applying the lemma, each block satisfies

\[
B = \Theta^{-1}\circ \widetilde{B}\circ \Theta.
\]

Since all layers share the same block, the $L$-layer composition telescopes:

\[
B^{\circ L}
= (\Theta^{-1}\circ \widetilde{B}\circ \Theta)^{\circ L}
= \Theta^{-1}\circ \left(\widetilde{B}\right)^{\circ L}\circ \Theta.
\]

The embedding and output head reparametrizations complete the proof.
\end{proof}

\section{Acknowledgements}
We would like to thank the anonymous reviewers at the ICLR DeLTA workshop for their helpful comments and suggestions, as well as Nils Graef (OpenMachine), Borjan Geshkovski (INRIA/Sorbonne), François Yvon (CNRS/Sorbonne) and Yiping Ji (University of Adelaide) for helpful discussions on this work. The first author would also like to thank Igor Ševo (HTEC/UniBL) for discussions on the attention mechanism. We also thank Dimitar Peshevski (UKIM) for suggestions on parameter analysis for hyperparameter improvements in future work.

This work was initiated by the first author while they were at the HTEC Group~\cite{karbevski2024preliminary}, and we thank the team for their support.

\addcontentsline{toc}{section}{References} 
\bibliographystyle{alpha}
\bibliography{sample}

@article{Horn1991,
  title = {Inequalities for unitarily invariant norms and bilinear matrix products},
  volume = {30},
  ISSN = {1563-5139},
  url = {http://dx.doi.org/10.1080/03081089108818114},
  DOI = {10.1080/03081089108818114},
  number = {4},
  journal = {Linear and Multilinear Algebra},
  publisher = {Informa UK Limited},
  author = {Horn,  Roger A. and Mathias,  Roy and Nakamura,  Yoshihiro},
  year = {1991},
  month = nov,
  pages = {303–314}
}

@misc{Karpathy, title={NanoGPT}, howpublished={\url{https://github.com/karpathy/nanoGPT/blob/master/model.py}}, journal={GitHub}, author={Karpathy, Andrej}}

@inproceedings{
loshchilov2018decoupled,
title={Decoupled Weight Decay Regularization},
author={Ilya Loshchilov and Frank Hutter},
booktitle={International Conference on Learning Representations},
year={2019},
url={https://openreview.net/forum?id=Bkg6RiCqY7},
}

@article{Dubey2024TheL3,
  title={The Llama 3 Herd of Models},
  author={Llama Team, AI @ Meta},
  journal={ArXiv},
  year={2024},
  volume={abs/2407.21783},
  url={https://api.semanticscholar.org/CorpusID:271571434}
}

@techreport{Radford2019LanguageMA,
  title       = {Language Models are Unsupervised Multitask Learners},
  author      = {Alec Radford and Jeff Wu and Rewon Child and David Luan and Dario Amodei and Ilya Sutskever},
  institution = {OpenAI},
  year        = {2019},
  url         = {https://cdn.openai.com/better-language-models/language_models_are_unsupervised_multitask_learners.pdf}
}

@misc{gemmateam2025gemma3technicalreport,
      title={Gemma 3 Technical Report}, 
      author={Gemma Team and others},
      year={2025},
      eprint={2503.19786},
      archivePrefix={arXiv},
      primaryClass={cs.CL},
      url={https://arxiv.org/abs/2503.19786}, 
}

@misc{olmo2025olmo3,
      title={Olmo 3}, 
      author={Team Olmo and others},
      year={2025},
      eprint={2512.13961},
      archivePrefix={arXiv},
      primaryClass={cs.CL},
      url={https://arxiv.org/abs/2512.13961}, 
}

@inproceedings{attnisallyouneed,
author = {Vaswani, Ashish and Shazeer, Noam and Parmar, Niki and Uszkoreit, Jakob and Jones, Llion and Gomez, Aidan N. and Kaiser, \L{}ukasz and Polosukhin, Illia},
title = {Attention is all you need},
year = {2017},
isbn = {9781510860964},
publisher = {Curran Associates Inc.},
address = {Red Hook, NY, USA},
booktitle = {Proceedings of the 31st International Conference on Neural Information Processing Systems},
pages = {6000–6010},
numpages = {11},
location = {Long Beach, California, USA},
series = {NIPS'17}
}

@misc{Gokaslan2019OpenWeb,  
	title={OpenWebText Corpus},
	author={Aaron Gokaslan and Vanya Cohen},
	howpublished={\url{http://Skylion007.github.io/OpenWebTextCorpus}}, 
	year={2019}
}

@article{Shazeer2019FastTD,
  title={Fast Transformer Decoding: One Write-Head is All You Need},
  author={Noam M. Shazeer},
  journal={ArXiv},
  year={2019},
  volume={abs/1911.02150},
  url={https://api.semanticscholar.org/CorpusID:207880429}
}

@inproceedings{press-wolf-2017-using,
    title = "Using the Output Embedding to Improve Language Models",
    author = "Press, Ofir  and
      Wolf, Lior",
    editor = "Lapata, Mirella  and
      Blunsom, Phil  and
      Koller, Alexander",
    booktitle = "Proceedings of the 15th Conference of the {E}uropean Chapter of the Association for Computational Linguistics: Volume 2, Short Papers",
    month = apr,
    year = "2017",
    address = "Valencia, Spain",
    publisher = "Association for Computational Linguistics",
    url = "https://aclanthology.org/E17-2025",
    pages = "157--163",
}

@article{lora,
  title={Lora: Low-rank adaptation of large language models},
  author={Hu, Edward J and Shen, Yelong and Wallis, Phillip and Allen-Zhu, Zeyuan and Li, Yuanzhi and Wang, Shean and Wang, Lu and Chen, Weizhu},
  journal={arXiv preprint arXiv:2106.09685},
  year={2021}
}

@article{qlora,
  title={Qlora: Efficient finetuning of quantized llms},
  author={Dettmers, Tim and Pagnoni, Artidoro and Holtzman, Ari and Zettlemoyer, Luke},
  journal={Advances in Neural Information Processing Systems},
  volume={36},
  year={2024}
}

@article{xu2023parameter,
  title={Parameter-efficient fine-tuning methods for pretrained language models: A critical review and assessment},
  author={Xu, Lingling and Xie, Haoran and Qin, Si-Zhao Joe and Tao, Xiaohui and Wang, Fu Lee},
  journal={arXiv preprint arXiv:2312.12148},
  year={2023}
}

@article{choromanski2020rethinking,
  title={Rethinking attention with performers},
  author={Choromanski, Krzysztof and Likhosherstov, Valerii and Dohan, David and Song, Xingyou and Gane, Andreea and Sarlos, Tamas and Hawkins, Peter and Davis, Jared and Mohiuddin, Afroz and Kaiser, Lukasz and others},
  journal={arXiv preprint arXiv:2009.14794},
  year={2020}
}

@article{wang2020linformer,
  title={Linformer: Self-attention with linear complexity},
  author={Wang, Sinong and Li, Belinda Z and Khabsa, Madian and Fang, Han and Ma, Hao},
  journal={arXiv preprint arXiv:2006.04768},
  year={2020}
}

@inproceedings{samsi2023words,
  title={From words to watts: Benchmarking the energy costs of large language model inference},
  author={Samsi, Siddharth and Zhao, Dan and McDonald, Joseph and Li, Baolin and Michaleas, Adam and Jones, Michael and Bergeron, William and Kepner, Jeremy and Tiwari, Devesh and Gadepally, Vijay},
  booktitle={2023 IEEE High Performance Extreme Computing Conference (HPEC)},
  pages={1--9},
  year={2023},
  organization={IEEE}
}

@article{ma2023llm,
  title={Llm-pruner: On the structural pruning of large language models},
  author={Ma, Xinyin and Fang, Gongfan and Wang, Xinchao},
  journal={Advances in neural information processing systems},
  volume={36},
  pages={21702--21720},
  year={2023}
}

@article{Houlsby2019ParameterEfficientTL,
  title={Parameter-Efficient Transfer Learning for NLP},
  author={Neil Houlsby and Andrei Giurgiu and Stanislaw Jastrzebski and Bruna Morrone and Quentin de Laroussilhe and Andrea Gesmundo and Mona Attariyan and Sylvain Gelly},
  journal={ArXiv},
  year={2019},
  volume={abs/1902.00751},
  url={https://api.semanticscholar.org/CorpusID:59599816}
}

@book{Vershynin_2018, place={Cambridge}, series={Cambridge Series in Statistical and Probabilistic Mathematics}, title={High-Dimensional Probability: An Introduction with Applications in Data Science}, publisher={Cambridge University Press}, author={Vershynin, Roman}, year={2018}, collection={Cambridge Series in Statistical and Probabilistic Mathematics}}

@inproceedings{GeshkEmergence,
author = {Geshkovski, Borjan and Letrouit, Cyril and Polyanskiy, Yury and Rigollet, Philippe},
title = {The emergence of clusters in self-attention dynamics},
year = {2023},
publisher = {Curran Associates Inc.},
address = {Red Hook, NY, USA},
booktitle = {Proceedings of the 37th International Conference on Neural Information Processing Systems},
articleno = {2493},
numpages = {12},
location = {New Orleans, LA, USA},
series = {NIPS '23}
}

@article{yun2019transformersuniversal,
  title={Are transformers universal approximators of sequence-to-sequence functions?},
  author={Yun, Chulhee and Bhojanapalli, Srinadh and Rawat, Ankit Singh and Reddi, Sashank J and Kumar, Sanjiv},
  journal={arXiv preprint arXiv:1912.10077},
  year={2019}
}

@inproceedings{ICLR2024_simplyfing_transformer,
 author = {He, Bobby and Hofmann, Thomas},
 booktitle = {International Conference on Representation Learning},
 editor = {B. Kim and Y. Yue and S. Chaudhuri and K. Fragkiadaki and M. Khan and Y. Sun},
 pages = {8882--8910},
 title = {Simplifying Transformer Blocks},
 url = {https://proceedings.iclr.cc/paper_files/paper/2024/file/24fd58f52ff8d0496add8da3991644e9-Paper-Conference.pdf},
 volume = {2024},
 year = {2024}
}

@article{graef2024transformer,
  title         = {Transformer tricks: Removing weights for skipless transformers},
  author        = {Nils Graef},
  journal       = {arXiv preprint arXiv:2404.12362},
  year          = {2024},
  eprint        = {2404.12362},
  archivePrefix = {arXiv},
  primaryClass  = {cs.LG},
  url           = {https://arxiv.org/abs/2404.12362}
}

@article{heimersheim2024remove,
  title         = {You can remove GPT2's LayerNorm by fine-tuning},
  author        = {Stefan Heimersheim},
  journal       = {arXiv preprint arXiv:2409.13710},
  year          = {2024},
  eprint        = {2409.13710},
  archivePrefix = {arXiv},
  primaryClass  = {cs.CL},
  doi           = {10.48550/arXiv.2409.13710},
  url           = {https://arxiv.org/abs/2409.13710},
  note          = {Presented at the ATTRIB and Interpretable AI workshops, NeurIPS 2024}
}

@article{baroni2025transformers,
  title         = {Transformers Don't Need LayerNorm at Inference Time: Scaling LayerNorm Removal to GPT-2 XL and the Implications for Mechanistic Interpretability},
  author        = {Luca Baroni and Galvin Khara and Joachim Schaeffer and Marat Subkhankulov and Stefan Heimersheim},
  journal       = {arXiv preprint arXiv:2507.02559},
  year          = {2025},
  eprint        = {2507.02559},
  archivePrefix = {arXiv},
  primaryClass  = {cs.LG},
  doi           = {10.48550/arXiv.2507.02559},
  url           = {https://arxiv.org/abs/2507.02559}
}

@article{qwen2025,
  title={Qwen3 Technical Report},
  author={Qwen Team},
  journal={arXiv preprint arXiv:2505.09388},
  year={2025}
}

@article{zhu2025transformersDyT,
  title={Transformers without Normalization},
  author={Zhu, Jiachen and Chen, Xinlei and He, Kaiming and LeCun, Yann and Liu, Zhuang},
  journal={arXiv preprint arXiv:2503.10622},
  year={2025},
  note={CVPR 2025},
  doi={10.48550/arXiv.2503.10622},
  url={https://arxiv.org/abs/2503.10622}
}

@article{su2021ROPEroformer,
  title={RoFormer: Enhanced Transformer with Rotary Position Embedding},
  author={Su, Jianlin and Lu, Yu and Pan, Shengfeng and Murtadha, Ahmed and Wen, Bo and Liu, Yunfeng},
  journal={arXiv preprint arXiv:2104.09864},
  year={2021}
}

@inproceedings{geshkovski2024mathematical,
  title={A Mathematical Perspective on Transformers},
  author={Geshkovski, Borjan and Letrouit, Cyril and Polyanskiy, Yury and Rigollet, Philippe},
  booktitle={Advances in Neural Information Processing Systems},
  volume={36},
  year={2024}
}

@inproceedings{lan2020albert,
    title = {{ALBERT: A Lite BERT for Self-supervised Learning of Language Representations}},
    author = {Lan, Zhenzhong and Chen, Mingda and Goodman, Sebastian and Gimpel, Kevin and Sharma, Piyush and Soricut, Radu},
    booktitle = {International Conference on Learning Representations (ICLR)},
    year = {2020},
    url = {https://arxiv.org/abs/1909.11942}
}

@inproceedings{bai2019deep,
  title={Deep equilibrium models},
  author={Bai, Shaojie and Kolter, J Zico and Koltun, Vladlen},
  booktitle={Advances in Neural Information Processing Systems},
  volume={32},
  year={2019}
}

@article{jacobs1991adaptive,
  title={Adaptive mixtures of local experts},
  author={Jacobs, Robert A and Jordan, Michael I and Nowlan, Steven J and Hinton, Geoffrey E},
  journal={Neural Computation},
  volume={3},
  number={1},
  pages={79--87},
  year={1991},
  publisher={MIT Press}
}

@inproceedings{shazeer2017outrageously,
  title={Outrageously large neural networks: The sparsely-gated mixture-of-experts layer},
  author={Shazeer, Noam and Mirhoseini, Azalia and Maziarz, Piotr and Davis, Andy and Le, Quoc and Hinton, Geoffrey and Dean, Jeff},
  booktitle={International Conference on Learning Representations (ICLR)},
  year={2017}
}

@article{brown2020language,
  title={Language models are few-shot learners},
  author={Brown, Tom and Mann, Benjamin and Ryder, Nick and Subbiah, Melanie and Kaplan, Jared D and Dhariwal, Prafulla and Neelakantan, Arvind and Shyam, Pranav and Sastry, Girish and Askell, Amanda and others},
  journal={Advances in neural information processing systems},
  volume={33},
  pages={1877--1901},
  year={2020}
}

@article{hendrycks2016gaussian,
  title={Gaussian error linear units (gelus)},
  author={Hendrycks, Dan and Gimpel, Kevin},
  journal={arXiv preprint arXiv:1606.08415},
  year={2016}
}

@article{ba2016layer,
  title={Layer normalization},
  author={Ba, Jimmy Lei and Kiros, Jamie Ryan and Hinton, Geoffrey E},
  journal={arXiv preprint arXiv:1607.06450},
  year={2016}
}

@article{zhang2019root,
  title={Root mean square layer normalization},
  author={Zhang, Biao and Sennrich, Rico},
  journal={Advances in Neural Information Processing Systems},
  volume={32},
  year={2019}
}

@article{MWM+24,
  title={The era of 1-bit llms: All large language models are in 1.58 bits},
  author={Ma, Shuming and Wang, Hongyu and Ma, Lingxiao and Wang, Lei and Wang, Wenhui and Huang, Shaohan and Dong, Li and Wang, Ruiping and Xue, Jilong and Wei, Furu},
  journal={arXiv preprint arXiv:2402.17764},
  year={2024}
}

@article{ALTdJ+23,
  title={GQA: Training generalized multi-query transformer models from multi-head checkpoints},
  author={Ainslie, Joshua and Lee-Thorp, James and de Jong, Michiel and Zemlyanskiy, Yury and Lebr{\'o}n, Federico and Sanghai, Sumit},
  journal={arXiv preprint arXiv:2305.13245},
  year={2023}
}

@misc{karbevski2024preliminary,
  author = {Karbevski, Marko and {HTEC Group}},
  title = {{HTEC AI Open Day Skopje | Possible Modifications to the Attention Mechanism}},
  year = {2024},
  howpublished = {YouTube video. Available at: \url{https://www.youtube.com/watch?v=UjTHj-cFJOA}},
  note = {Accessed: October 19, 2025}
}

@article{DA24,
  title={Deepseek-v3: Scaling open-source language models with efficient training},
  author={{DeepSeek-AI}},
  journal={arXiv preprint arXiv:2412.19437},
  year={2024}
}

@misc{li2025always,
      title={Always Skip Attention}, 
      author={Yiping Ji and Hemanth Saratchandran and Peyman Moghadam and Simon Lucey},
      year={2025},
      eprint={2505.01996},
      archivePrefix={arXiv},
      primaryClass={cs.LG},
      url={https://arxiv.org/abs/2505.01996}, 
    }

@inproceedings{he2023deep,
  title={Deep Transformers without Shortcuts: Modifying Self-attention for Faithful Signal Propagation},
  author={He, Bobby and Martens, James and Zhang, Guodong and Botev, Aleksandar and Brock, Andrew and Smith, Samuel L and Teh, Yee Whye},
  booktitle={International Conference on Learning Representations (ICLR)},
  year={2023}
}

@inproceedings{IHL+24,
  title = {From Self-Attention to {M}arkov Models: Unveiling the Dynamics of Generative Transformers},
  author = {Ildiz, Muhammed Emrullah and Huang, Yixiao and Li, Yingcong and Rawat, Ankit Singh and Oymak, Samet},
  booktitle = {Proceedings of the 41st International Conference on Machine Learning},
  pages = {20955--20982},
  year = {2024},
  volume = {235},
  series = {Proceedings of Machine Learning Research},
  publisher = {PMLR}
}

@inproceedings{dao2022flashattention,
  title={{FlashAttention}: Fast and Memory-Efficient Exact Attention with {IO}-Awareness},
  author={Dao, Tri and Fu, Daniel Y. and Ermon, Stefano and Rudra, Atri and R{\'e}, Christopher},
  booktitle={Advances in Neural Information Processing Systems (NeurIPS)},
  year={2022}
}

@misc{pytorch,
  author = {Paszke, Adam and Gross, Sam and others},
  title = {{PyTorch}: An Imperative Style, High-Performance Deep Learning Library},
  year = {2019},
  howpublished = {\url{https://pytorch.org/}}
}

@inproceedings{He2016, author = {Kaiming He and Xiangyu Zhang and Shaoqing Ren and Jian Sun}, title = {Identity Mappings in Deep Residual Networks}, booktitle = {European Conference on Computer Vision (ECCV)}, year = {2016}, pages = {630--645}, doi = {10.1007/978-3-319-46493-0_38} }

@inproceedings{Arora2018, author = {Sanjeev Arora and Amitabh Basu and Poorya Mianjy and Anirbit Mukherjee}, title = {Understanding deep neural networks with rectified linear units}, booktitle = {International Conference on Machine Learning (ICML)}, year = {2018}, pages = {2810--2819}, note = {Available as arXiv:1611.01491 (extended versions)} }

@inproceedings{Montufar2014, author = {Guillermo F. Mont{\'u}far and Razvan Pascanu and Kyunghyun Cho and Yoshua Bengio}, title = {On the number of linear regions of deep neural networks}, booktitle = {Advances in Neural Information Processing Systems (NeurIPS)}, year = {2014}, volume = {27}, pages = {2924--2932}, note = {NeurIPS 2014} }

@article{tay2022efficient,
  author    = {Yi Tay and Mostafa Dehghani and Dara Bahri and Donald Metzler},
  title     = {Efficient Transformers: A Survey},
  journal   = {ACM Computing Surveys},
  volume    = {55},
  number    = {6},
  article   = {109},
  year      = {2022},
  publisher = {Association for Computing Machinery},
  doi       = {10.1145/3530811},
  url       = {https://dl.acm.org/doi/10.1145/3530811}
}

@inproceedings{NEURIPS_WDRole,
 author = {D\textquotesingle Angelo, Francesco and Andriushchenko, Maksym and Varre, Aditya and Flammarion, Nicolas},
 booktitle = {Advances in Neural Information Processing Systems},
 doi = {10.52202/079017-0730},
 editor = {A. Globerson and L. Mackey and D. Belgrave and A. Fan and U. Paquet and J. Tomczak and C. Zhang},
 pages = {23191--23223},
 publisher = {Curran Associates, Inc.},
 title = {Why Do We Need Weight Decay in Modern Deep Learning?},
 url = {https://proceedings.neurips.cc/paper_files/paper/2024/file/29496c942ed6e08ecc469f4521ebfff0-Paper-Conference.pdf},
 volume = {37},
 year = {2024}
}

@inproceedings{ji2025always,
  title={Always Skip Attention},
  author={Ji, Yiping and Saratchandran, Hemanth and Moghadam, Peyman and Lucey, Simon},
  booktitle={Proceedings of the IEEE/CVF International Conference on Computer Vision (ICCV)},
  year={2025}
}

@article{jolicoeurmartineau2025trm,
  title={Less is More: Recursive Reasoning with Tiny Networks},
  author={Jolicoeur-Martineau, Alexia},
  journal={arXiv preprint arXiv:2510.04871},
  year={2025}
}

@inproceedings{li2018visualizing,
  title={Visualizing the Loss Landscape of Neural Nets},
  author={Li, Hao and Xu, Zheng and Taylor, Gavin and Studer, Christoph and Goldstein, Tom},
  booktitle={Advances in Neural Information Processing Systems},
  volume={31},
  year={2018}
}

@inproceedings{orhan2017skip,
  title={Skip Connections Eliminate Singularities},
  author={Orhan, A. Emin and Pitkow, Xaq},
  booktitle={International Conference on Learning Representations},
  year={2018}
}

@INPROCEEDINGS{he2016deep,
  author={He, Kaiming and Zhang, Xiangyu and Ren, Shaoqing and Sun, Jian},
  booktitle={2016 IEEE Conference on Computer Vision and Pattern Recognition (CVPR)},
  title={Deep Residual Learning for Image Recognition},
  year={2016},
  pages={770-778},
  doi={10.1109/CVPR.2016.90}
}

@article{hoffmann2022training,
  title={Training compute-optimal large language models},
  author={Hoffmann, Jordan and Borgeaud, Sebastian and Mensch, Arthur and Buchatskaya, Elena and Cai, Trevor and Rutherford, Eliza and Casas, Diego de Las and Hendricks, Lisa Anne and Welbl, Johannes and Clark, Aidan and others},
  journal={Advances in Neural Information Processing Systems},
  volume={35},
  pages={30016--30030},
  year={2022}
}

@inproceedings{henry2020querykey,
  title={Query-Key Normalization for Transformers},
  author={Henry, Alex and Dachapally, Prudhvi Raj and Paber, Shubham and Chen, Yuxuan},
  booktitle={Findings of the Association for Computational Linguistics: EMNLP 2020},
  pages={4246--4253},
  year={2020}
}

@article{zhu2024hyperconnections,
  title={Hyper-Connections},
  author={Zhu, Defa and Huang, Hongzhi and Huang, Zihao and Zeng, Yutao and Mao, Yunyao and Wu, Banggu and Min, Qiyang and Zhou, Xun},
  journal={arXiv preprint arXiv:2409.19606},
  year={2024}
}

@article{xie2025mhc,
  title={{mHC}: Manifold-Constrained Hyper-Connections},
  author={Xie, Zhenda and Wei, Yixuan and Cao, Huanqi and Zhao, Chenggang and Deng, Chengqi and Li, Jiashi and Dai, Damai and Gao, Huazuo and Chang, Jiang and Yu, Kuai and Zhao, Liang and Zhou, Shangyan and Xu, Zhean and Zhang, Zhengyan and Zeng, Wangding and Hu, Shengding and Wang, Yuqing and Yuan, Jingyang and Wang, Lean and Liang, Wenfeng},
  journal={arXiv preprint arXiv:2512.24880},
  year={2025}
}

\end{document}